\newtheorem{theorem}{Theorem}[section]
\newtheorem{assumption}{Assumption}[section]
\DeclareMathOperator*{\argmin}{argmin}
\title{Dynamic Learning Rate Scheduling based on Loss Changes Leads to Faster Convergence}
\author{Shreyas Subramanian \\
  Amazon Web Services \\
  Seattle, Washington \\
  \texttt{subshrey@amazon.com} \\\And
  Bala Krishnamoorthy \\
  Amazon Web Services \\
  Seattle, Washington \\
  \texttt{bkrism@amazon.com} \\\And
  Pranav Murthy \\
  Amazon Web Services \\
  Seattle, Washington \\
  \texttt{pranavvm@amazon.com} \\
  }
\begin{document}
\maketitle
\begin{abstract}
Despite significant advances in optimizers for training, most research works use common scheduler choices like Cosine or exponential decay. In this paper, we study \emph{GreedyLR}, a novel scheduler that adaptively adjusts the learning rate during training based on the current loss. To validate the effectiveness of our proposed scheduler, we conduct experiments on several NLP, CV, and LLM tasks with up to $7B$ parameters, including both fine-tuning and pre-training experiments. The results show that our approach outperforms several state-of-the-art schedulers in terms of accuracy, speed, and convergence. We also provide a theoretical analysis of the GreedyLR algorithm, including a proof of convergence and derivation of the optimal scaling factor $F$ that maximizes the convergence rate, along with experiments to show robustness of the algorithm to realistic noisy landscapes. Our scheduler is easy to implement, computationally efficient, and could be considered a good default scheduler for training.
\end{abstract}

\section{Introduction}
Selecting a learning rate (LR) scheduler for training is important, but is often done with minimal thought. Many recent works default to using specific LR schedulers such as the Cosine Annealing scheduler, frequently without a strong technical justification for their choice. 

As a first form of changing learning rates adaptively through training, several adaptive optimization methods have been proposed, such as Adam (Adaptive Moment Estimation) \cite{Kingma2014AdamAM} and RMSProp (Root Mean Square Propagation), which dynamically adjust the learning rate based on gradients and the history of updates. However, these adaptive optimizers often underperform in practice with their default settings \cite{Wilson2017TheMV,Macdo2021TrainingAS}. Techniques proposed by \cite{Vaswani2019PainlessSG,Armijo1966MinimizationOF} aim to determine the optimal LR at each training step by treating it as a line search problem. These methods still use a fixed, predetermined schedule.

The main drawback of fixed schedules is their generality, which prevents adaptation to the specific characteristics of the optimization problem or the model architecture. Different problems and architectures often require distinct LR schedules for optimal performance. Therefore, there is a pressing need for a learning rate scheduler that is both simple and adaptable to the specific optimization problem.

There is a growing trend towards using learning rate schedules that adjust the LR during training. In our work, we propose a novel and simple scheduler called \textit{GreedyLR}, which adaptively chooses the learning rate. Our contributions are as follows: \begin{enumerate}
    \item We conduct a variety of experiments from small models to Large Language Models (LLMs) with billions of parameters to validate performance of the scheduler across model scalses, use cases and datasets\\
    \item We demonstrate GreedyLR's effectiveness across both fine-tuning and pre-training paradigms, establishing its utility as a general-purpose scheduler for diverse training scenarios\\
    \item We study critical hyperparameters, as well as the robustness of the scheduler to simulated noisy environments to encourage using \textit{GreedyLR} as a default scheduler choice in training experiments. \\
\end{enumerate}

\section{Related Work}

The scheduling of learning rates is a critical factor in the training of deep neural networks (DNNs), influencing both convergence speed and final model performance. \cite{Macdo2021TrainingAS,Dauphin2014IdentifyingAA} suggest that neural network training occurs in phases, advocating for different learning rates at each phase to facilitate convergence. \cite{Smith2017SuperconvergenceVF,Smith2015CyclicalLR} employ cyclical variations of the learning rate based on preset heuristics to improve training dynamics. \cite{nakamura2021learning} propose a novel annealing schedule combining a sigmoid function with a warmup phase that maintains large learning rates during early and middle training stages while smoothing transitions to avoid abrupt changes in step size. \cite{yedida2019novel} derive a theoretical framework for dynamically computing learning rates based on the Lipschitz constant of the loss function, though their experiments indicate challenges in generalizing across architectures like ResNets. \cite{9534014} introduce an Adaptive Scheduler for Learning Rate (ASLR) that requires minimal hyperparameter tuning and adapts based on validation error trends, reducing computational burden while remaining effective across various network topologies. \cite{kim2021automated} propose an automated scheduler combining adaptive warmup and predefined decay phases for large-batch training, achieving superior performance with stochastic optimizers like AdamP and LAMB. \cite{defazio2023and} present a refined adaptive scheduling approach that focuses on the last iterate rather than the average, adjusting schedules based on observed gradient norms and often outperforming popular schedules like cosine annealing. \cite{app112110184} propose Adacomp, a zeroth-order method adjusting learning rates based on loss values alone that shows robustness across datasets and architectures, though it falls short of state-of-the-art adaptive methods in achieving maximum validation accuracy. \cite{jin2021autolrs} leverage Bayesian optimization in AutoLRS to dynamically search for optimal learning rates during training, balancing exploration and exploitation to yield significant speedups over state-of-the-art schedules. \cite{10.1145/3377930.3390158} explore evolutionary approaches in AutoLR, which evolves learning rate policies specific to neural network architectures using Structured Grammatical Evolution to generate efficient schedules. \cite{yedida2021lipschitzlr} provide a theoretical framework for adaptive learning rates based on the Lipschitz constant that achieves faster convergence by analytically determining optimal rates for various optimizers.

Collectively, these studies highlight the diversity and complexity of adaptive learning rate scheduling methods, each with unique strengths and suitable applications, contributing significantly to the efficient training of deep learning models. Despite these advancements, limitations remain. Many methods require substantial computational resources, making them less accessible for practitioners with limited resources \cite{jin2021autolrs}. Methods based on theoretical frameworks like Lipschitz constants may face challenges in accurately estimating necessary parameters in practical, noisy environments, leading to suboptimal performance \cite{yedida2019novel, yedida2021lipschitzlr}. Additionally, the complexity of certain algorithms, such as ASLR and those utilizing advanced statistical models, can make them difficult to implement and tune without deep expertise, thus limiting their usability \cite{khodamoradi2021aslr, kim2021automated}. Moreover, despite claims of generalizability, many techniques show varying degrees of effectiveness across different architectures and datasets, indicating that no single method universally outperforms others \cite{nakamura2021learning, defazio2023and}. 

Finally, the lack of standardization in benchmarking and evaluation methodologies for schedulers makes it challenging to directly compare the effectiveness of different scheduling approaches, further complicating the selection of the most appropriate method for a given application. We understand the ``no free lunch'' principle, and the fact that coming up with a scheduler that outperforms all other schedulers in all use cases may not be possible even with our contributions below, but we believe we can come up with a good, sensible default choice that is simple to implement and reliable in terms of performance. Next, we describe the GreedyLR scheduler, followed by theoretical proofs of convergence and experiments with LLMs.

\section{\emph{GreedyLR} Scheduler}

The \textit{GreedyLR} scheduler adjusts the learning rate based on changes in loss. Algorithm \ref{alg:greedylr} in the Appendix provides a detailed view of the implementation. In its simplest form, the scheduler uses a fixed factor $F$ between $(0,1)$ to modify the LR: it multiplies the rate by this factor if the loss worsens to decrease LR, or divides by the same factor $F$ if the loss improves to increase LR. The intuition behind using the scaling factor $F$ to increase or decrease the learning rate based on the change in loss. If the loss value decreases ($l_t < l_{t-1}$) over time, it suggests that we are moving in a direction that potentially reduces the objective function. In this case, we want to take a larger step in the same direction by increasing the learning rate ($\gamma_t = \gamma_{t-1} / F$, where $F < 1$). We do the opposite if the loss value increases ($l_t \ge l_{t-1}$). Although preliminary in nature, we refer the interested reader to the appendix that discusses theoretical properties of such an algorithm when applied to SGD. We show that:

\begin{enumerate}
    \item \textit{GreedyLR} converges with a rate of $O(1/T)$ for the expected sub-optimality of the average iterate $\bar{x}_T$. [Theorem \ref{thm:main}]. We support this with various real world fine tuning and pre-training experiemnts, along with experiments that show robustness to noise.
    \item The optimal value of the factor $F$ is $F = 1 - \frac{1}{L_{\max}}$, where $L_{\max}$ is the smoothness constant of the objective function.  [Theorem \ref{thm:optimal_scaling_factor}]. We support this with experiment results from a $F$-sweep in Section~\ref{sec:Fsweep}
\end{enumerate}

\section{Experimental Results}

We evaluated GreedyLR across diverse model scales and tasks to assess its
effectiveness as a general-purpose scheduler. Our experiments span models from
tens of millions to 7 billion parameters, covering NLP, CV, and LLM tasks in
both fine-tuning and pre-training paradigms. Figure~\ref{fig:perf-vs-model-size}
summarizes performance across parameter scales, showing that GreedyLR matches or
exceeds baseline schedulers in the majority of experiments, with particularly
strong benefits in the 1-200M parameter range (see positive final loss delta except for a few outliers).

Key findings include: (1) For small models (<500M parameters), GreedyLR performs
as good or better than popular schedulers in 86.73\% of experiments across 132
training runs, with average loss improvement of 0.16 and maximum benefit of 2.3.
(2) For large models (500M-7B parameters), GreedyLR achieves 83.33\%
as-good-or-better performance in fine-tuning, with strong gains (up to 47\%)
during early training. (3) In pre-training on Llama-3.2-1B using
RedPajama-arxiv, GreedyLR achieves 5.4\% lower final loss versus Cosine
scheduling. (4) Empirical analysis reveals a stability threshold at $F \geq 0.5$
for the scaling factor, above which performance is robust (within 1.5\%
variation), eliminating precise hyperparameter tuning. In the next few sub-sections, we will dive deeper into the above results. 

\begin{figure}[hbt!]
    \centering
    \includegraphics[width=1.1\linewidth]{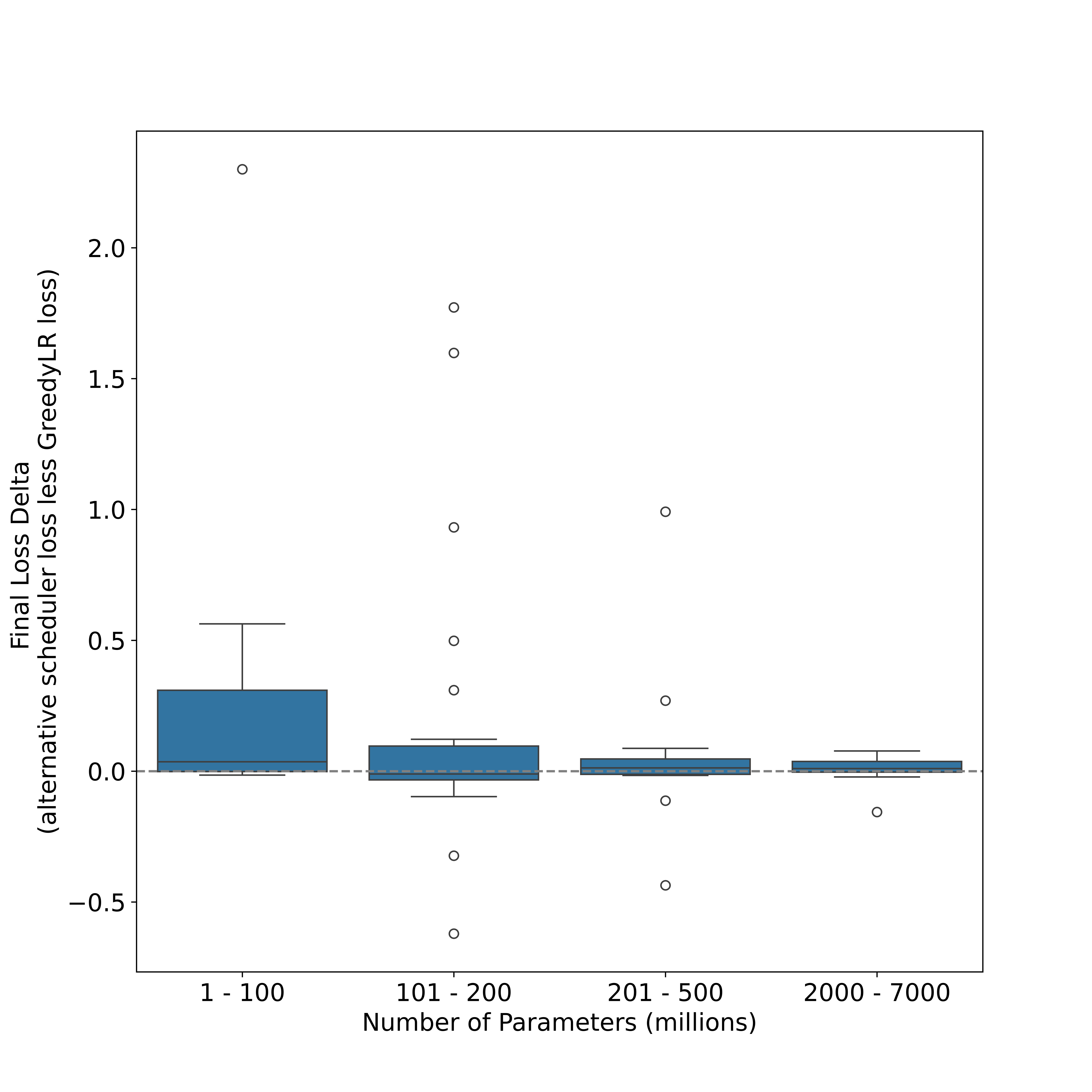}
    \caption{Training Performance (final loss delta) vs Model Size (number of parameters)}
    \label{fig:perf-vs-model-size}
\end{figure}

\subsection{Small Model Results}

``Small Model'' hererefers to models with $\leq$500M parameters. We conducted 132
experiments across 16 model architectures (including Pegasus, BERT, T5, BART,
ResNet, ViT, Camembert families) and 15 diverse datasets spanning translation
(WMT16, Opus100), QnA (SQUAD, Adversarial QA, Quoref), summarization (XSUM,
Amazon reviews), NER (Conllpp, Wikiann, Xglue), and image tasks (CIFAR-10/100,
Tiny ImageNet, sidewalk-semantic). We tested 4 optimizers (AdamW, Adafactor,
Adagrad, SGD) with 5 schedulers (Linear, Cosine, Polynomial,
Constant+warmup, plus GreedyLR).

All experiments ran on ml.g4dn.16xlarge Amazon SageMaker instances with
identical seeds, initial learning rates, and Huggingface defaults. For
GreedyLR: patience=10, min\_lr=$10\%$ of initial LR, smoothing window=50. We
measured loss at $10\%$, $50\%$, and $100\%$ of training steps (typically
1000-5000 steps). See Appendix Table~\ref{tab:doe} for complete experimental
design.

Tables~\ref{tab:0a} and~\ref{tab:0b} summarize the detailed task results. Table~\ref{tab:0a}
categorizes instances where GreedyLR, using the same optimizer as baseline,
clearly outperforms (``yes''), outperforms with no significant difference
(``yes*''), clearly underperforms (``no''), or underperforms with no
significant difference (``no*''). Insignificant difference at any stage is
defined as absolute loss difference < ±0.1, labeled ``yes*'' and ``no*''.

Table~\ref{tab:0b} presents summary statistics from our small model
experiments. Across use cases, GreedyLR is as good or better (``yes'',
``yes*'', ``no*'') than tested schedulers $>86\%$ of the time, and better
(``yes'', ``yes*'') $57\%$ of the time. It clearly outperforms (``yes'') in
$~25\%$ of instances. For final loss, GreedyLR with same base optimizer
outperforms $>91\%$ of runs.

\begin{table}[htbp]
\centering
\caption{Results summary - counts of how often GreedyLR beats other schedulers
across all small model experiments at three stages (10, 50, 100\% of max
steps). ``Yes'' = GreedyLR clearly outperforms, ``no'' = clearly underperforms;
* indicates loss delta < $\pm$0.1 (no significant difference).}
    \begin{tabular}{|l|l|l|l|l|p{2cm}|}
        \hline
        yes & yes* & no & no* & sum & Final loss delta in +/- 0.1 \\
        \hline
        48  & 64   & 26 & 58  & 196 & 39                          \\
        \hline
    \end{tabular}
    \label{tab:0a}
\end{table}
\begin{table}[htbp]
\caption{Summary of performance calculated from Table~\ref{tab:0a}}
    \centering
    \begin{tabular}{|p{4cm}|p{3cm}|}
        \hline
        \textbf{Metric}           & \textbf{Percentage (\%)}          \\ \hline
        Overall as good or better & 86.73                             \\
        Overall Better            & 57.14                             \\
        Overall Worse             & 13.27                             \\
        Overall as good           & 62.24                             \\

        Clearly better            & 24.49                             \\
        \hline
        \textbf{Metric}           & \textbf{Baseline - GreedyLR loss} \\ \hline
        Average benefit           & 0.16                              \\
        Max benefit               & 2.3                               \\
        Max deficit               & -0.62                             \\
        \hline
    \end{tabular}
    
    \label{tab:0b}
\end{table}
\begin{table}[htbp]
\caption{Summary of performance by stage showing percentage of times GreedyLR
is as-good-or-better}
    \centering
    \begin{tabular}{|l|l|l|}
        \hline
        Stage 1 (10\%) & Stage 2 (50\%) & Stage 3 (100\%) \\
        \hline
        92.42          & 81.81          & 85.94           \\
        \hline
    \end{tabular}
    \label{tab:0c}
\end{table}

\subsection{Large Model Results}

``Large Model'' in this context refers to models with parameters greater than 500 million parameters up to 7B parameters. We conducted experiments across multiple model architectures to evaluate GreedyLR's effectiveness in both fine-tuning and pre-training scenarios.

\subsubsection{Fine-Tuning Experiments}

We conducted 8 fine-tuning experiments across three popular model architectures with varying parameter sizes: Microsoft's Phi-2 (2 billion parameters), TII UAE's Falcon 7B (7 billion parameters), and Google's Gemma 7B (7 billion parameters). These large model architectures were fine-tuned using three different modalities of datasets, as summarized in Table~\ref{tab:doe_large_models}. 

\begin{table}[hbt!]
\caption{Design of Experiments (DoE) for Large Model Architectures}
\begin{tabular}{|p{3cm}|p{4cm}|}
\hline
\textbf{Models} & \textbf{Datasets} \\
\hline
Microsoft Phi2 2B & w601sxs/simpleCoT \\
TII UAE Falcon 7B  & b-mc2/sql-create-context \\
Google Gemma 7B  & jpacifico/French-Alpaca-dataset-Instruct-55K \\
\hline
\textbf{Optimizers} & \textbf{Schedulers} \\
\hline
AdamW & GreedyLR (ours) \\
 & Cosine \\
\hline
\end{tabular}
\label{tab:doe_large_models}
\end{table}

A brief description of the datasets from Huggingface used for fine-tuning follows:

\begin{enumerate}
    \item \emph{w601sxs/simpleCoT}: An instruct-tune format dataset designed to adapt pretrained models to the instruct format. We constructed simpleCoT from several Open source datasets on Huggingface with open licenses including Orca, Wizard LM, Kaist, and AlpacaCoT.\cite{orca, alpaca, kaist, wizardlm}
    \item \emph{b-mc2/sql-create-context}: A collection of natural language queries in the instruct-tune format which is a combination of Seq2SQL and Spider datasets.\cite{spider1, spider2}
    \item \emph{jpacifico/French-Alpaca-dataset-Instruct-55K}: A synthetically generated collection of 55K French language Alpaca formatted instructions.
\end{enumerate}

Overall results for fine tuning of larger models  comparing GreedyLR performance with schedulers tested are shown in Tables~\ref{tab:1b} and \ref{tab:1c}, which are derived from Table~\ref{tab:1a}. Note that for comparison, GreedyLR is compared to the Cosine scheduler, a default implementation in the Huggingface Python library. For LLM experiments, we see that across three stages of fine-tuning, GreedyLR is 83.33\% as good or better than the baseline; GreedyLR is clearly better 62.5\% of the measured datapoints. While no scheduler can show superior performance across all stages, datasets, and model baselines, we point out that in the experiments run for LLMs, GreedyLR has a net positive benefit, with a maximum benefit of 47\% and maximum deficit of 28\%. Specifically, we see good improvement in each of the three stages (10\%, 50\%, and 100\% of the max steps), with an uplift in the early stages of convergence.

\begin{table}[h]
\centering
    \begin{tabular}{|l|l|l|l|l|p{2cm}|}
        \hline
        yes & yes* & no & no* & sum & Final loss delta in +/- 1\% \\
        \hline
        15  & 4   & 4 & 1  & 24 & 6                          \\
        \hline
    \end{tabular}
 
    \caption{Results summary - counts of how often GreedyLR scheduler beats other schedulers across LLM experiments, and at three stages (10, 50 and 100\% of max steps). Yes means that GreedyLR with the same base optimizer beats the scheduler in comparison, and no means that it does not. * indicates that the loss delta at the measured point is less than $+/- 1\%$}
       \label{tab:1a}
\end{table}
\begin{table}[h]
    \centering
    \begin{tabular}{|p{4cm}|p{3cm}|}
        \hline
        \textbf{Metric}           & \textbf{Percentage (\%)}          \\ \hline
        Overall as good or better & 83.33                            \\
        Overall Better            & 79.16                            \\
        Overall Worse             & 16.66                            \\
        Overall as good           & 20.83                             \\

        Clearly better            & 62.5                             \\
        \hline
        \textbf{Metric}           & \textbf{Baseline - GreedyLR loss} \\ \hline
        Average benefit           & 2.89\%                              \\
        Max benefit               & 47\%                               \\
        Max deficit               & 28\%                            \\
        \hline
    \end{tabular}
    
    \caption{Summary of performance calculated from Table \ref{tab:1a}}
    \label{tab:1b}
\end{table}
\begin{table}[h]
    \centering
    \begin{tabular}{|l|l|l|}
        \hline
        Stage 1 (10\%) & Stage 2 (50\%) & Stage 3 (100\%) \\
        \hline
        87.5          & 75          & 75          \\
        \hline
    \end{tabular}
    
    \caption{Summary of performance calculated by stage showing what percentage of times GreedyLR is overall as good, or better }
    \label{tab:1c}
\end{table}

Figure~\ref{fig:3}a shows the dynamic learning rate (LR) generated by GreedyLR, in comparison to the standard Cosine scheduler. The loss curve for Gemma-7B fine-tuning (Figure~\ref{fig:3}b) shows accelerated early convergence for GreedyLR compared to Cosine. 
\begin{figure}[h!]
    \centering
    \begin{subfigure}[b]{\columnwidth}
        \includegraphics[width=\linewidth]{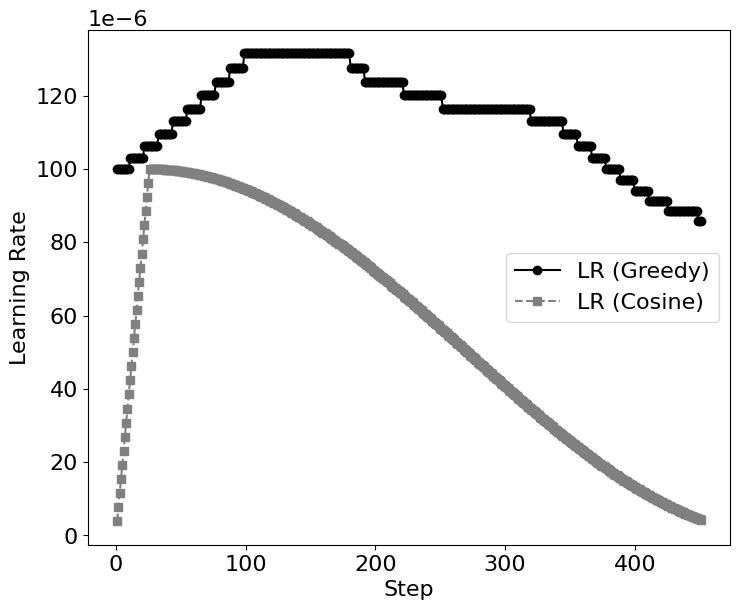}
        \caption{Learning Rate}
        \label{fig:3:lr}
    \end{subfigure}
    \hfill
    \begin{subfigure}[b]{\columnwidth}
        \includegraphics[width=\linewidth]{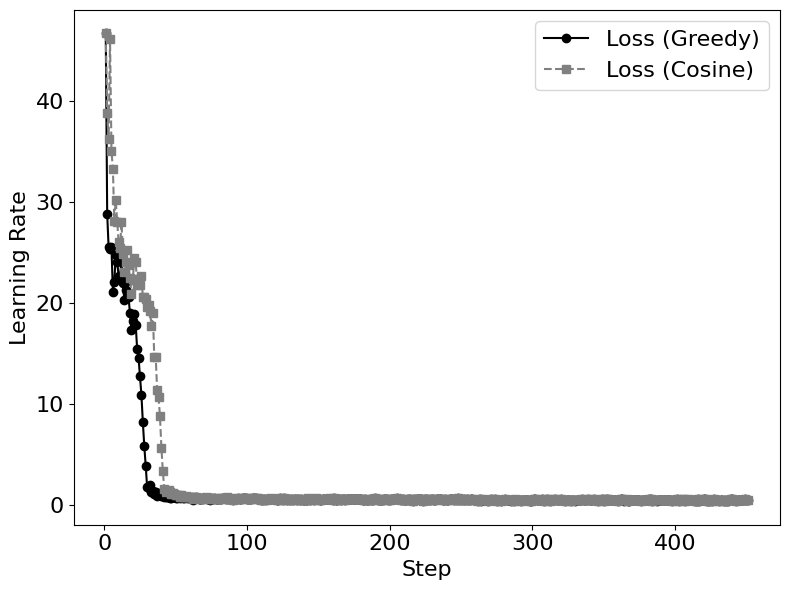}
        \caption{Loss}
        \label{fig:3:loss}
    \end{subfigure}
    \caption{Google Gemma-7b, showing (a) learning rate schedules and (b) loss trajectories for the Greedy and Cosine schedulers. We observe that the Greedy scheduler significantly outperforms the Cosine scheduler during the early stages of training.}
    \label{fig:3}
\end{figure}

GreedyLR significantly outperforms Cosine during early
fine-tuning stages (when larger gradient updates and domain adaptation occur)
and performs as-well-or-better in later stages. Detailed results in Appendix
Table~\ref{tab:results_summary} show GreedyLR outperforms Cosine in all
experiments during the first 10\% of training and in 5 of 6 experiments
throughout training.

\subsubsection{Pre-Training Experiments}

To assess effectiveness beyond fine-tuning, we pre-trained Meta's Llama-3.2-1B
on RedPajama-arxiv for 1000 steps ($\gamma_0 = 2 \times 10^{-4}$, warmup=100,
batch size=1 with gradient accumulation=32, bf16). GreedyLR used $F=0.95$,
min\_lr=$1.85 \times 10^{-5}$, smoothing enabled.

GreedyLR achieves 1.0\%, 3.0\%, and 5.4\% lower loss at 10\%, 50\%, and 100\%
of training respectively (final: 2.16 vs 2.28). Unlike fine-tuning where
early-stage benefits dominate, pre-training shows accelerating advantages,
suggesting GreedyLR's loss-based adaptation is particularly effective in
high-variance settings without prior task knowledge. See Appendix
Figure~\ref{fig:llama_pretrain_appendix} for detailed learning rate schedules
and loss trajectories.

\subsection{Stability Threshold for Scaling Factor $F$}
\label{sec:Fsweep}
While Theorem~\ref{thm:optimal_scaling_factor} establishes the theoretical
optimal value $F = 1 - \frac{1}{L_{\max}}$, $L_{\max}$ is typically unknown in
practice. We conducted a systematic F-sweep using Microsoft Phi-2 (2B
parameters) on w601sxs/simpleCoT with $F \in \{0.25, 0.50, 0.75, 0.99\}$ over
250 steps.

Figure~\ref{fig:f_sweep} reveals a critical stability threshold: $F=0.25$
caused catastrophic divergence (final loss 7.78 vs initial 2.28), while all
$F \geq 0.5$ achieved stable convergence with nearly identical performance
(losses 1.89, 1.92, 1.91---within 1.5\%). This demonstrates that practitioners
need only ensure $F \geq 0.5$ for stability, eliminating precise hyperparameter
tuning. See Appendix Figure~\ref{fig:f_sweep_detailed} for detailed analysis
including learning rate dynamics and zoomed comparisons.

\begin{figure}[h!]
    \centering
    \includegraphics[width=\linewidth]{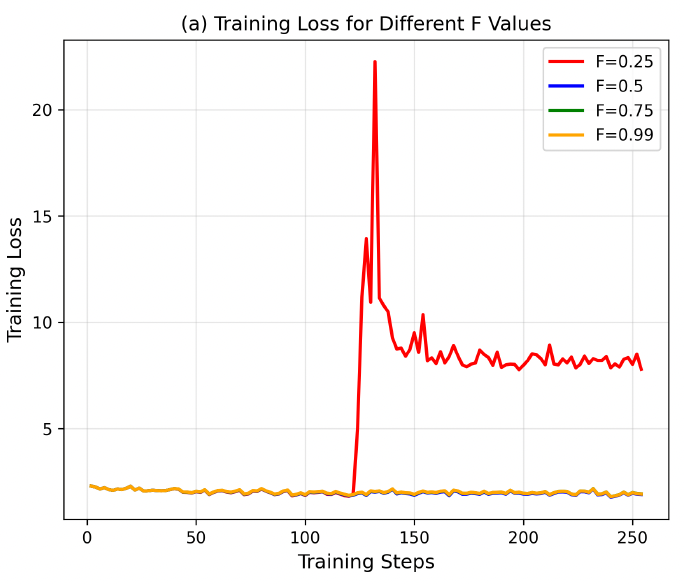}
    \caption{Training loss trajectories for different scaling factor $F$ values
on Microsoft Phi-2 fine-tuning, demonstrating the critical stability threshold
at $F \geq 0.5$. $F=0.25$ causes catastrophic divergence while all $F \geq 0.5$
achieve stable convergence with similar performance (within 1.5\%). See Appendix
Figure~\ref{fig:f_sweep_detailed} for detailed analysis.}
    \label{fig:f_sweep}
\end{figure}

\subsection{Robustness Experiments}
\label{sec:robustness_main}

We conducted $8100$ training experiments to evaluate GreedyLR's robustness against real-world training perturbations. Our experimental design (detailed in Appendix~\ref{sec:robustess}) includes five noise types applied as additive perturbations to the loss function: Gaussian noise (stochastic gradient errors), periodic spike noise (scheduled disruptions every 50-100 steps), random spike noise (2\% probability, simulating hardware glitches), adversarial noise (opposing optimization progress), and a clean baseline. We evaluated four schedulers across 12 neural architectures, with GreedyLR receiving comprehensive evaluation ($n=3241$ runs) compared to baseline schedulers ($n \approx 1620$ each).

Figure~\ref{fig:figrob1} demonstrates GreedyLR's superior performance, achieving the lowest median final loss (0.148) compared to cosine annealing (0.232), cosine with restarts (0.226), and exponential decay (0.249). The performance heatmap (Figure~\ref{fig:figrob3}) reveals GreedyLR's consistent robustness across all noise conditions, with particularly strong performance under adversarial, Gaussian, and spike perturbations where traditional schedulers show high variability.

\begin{figure}[h]
    \centering
    \includegraphics[width=\columnwidth]{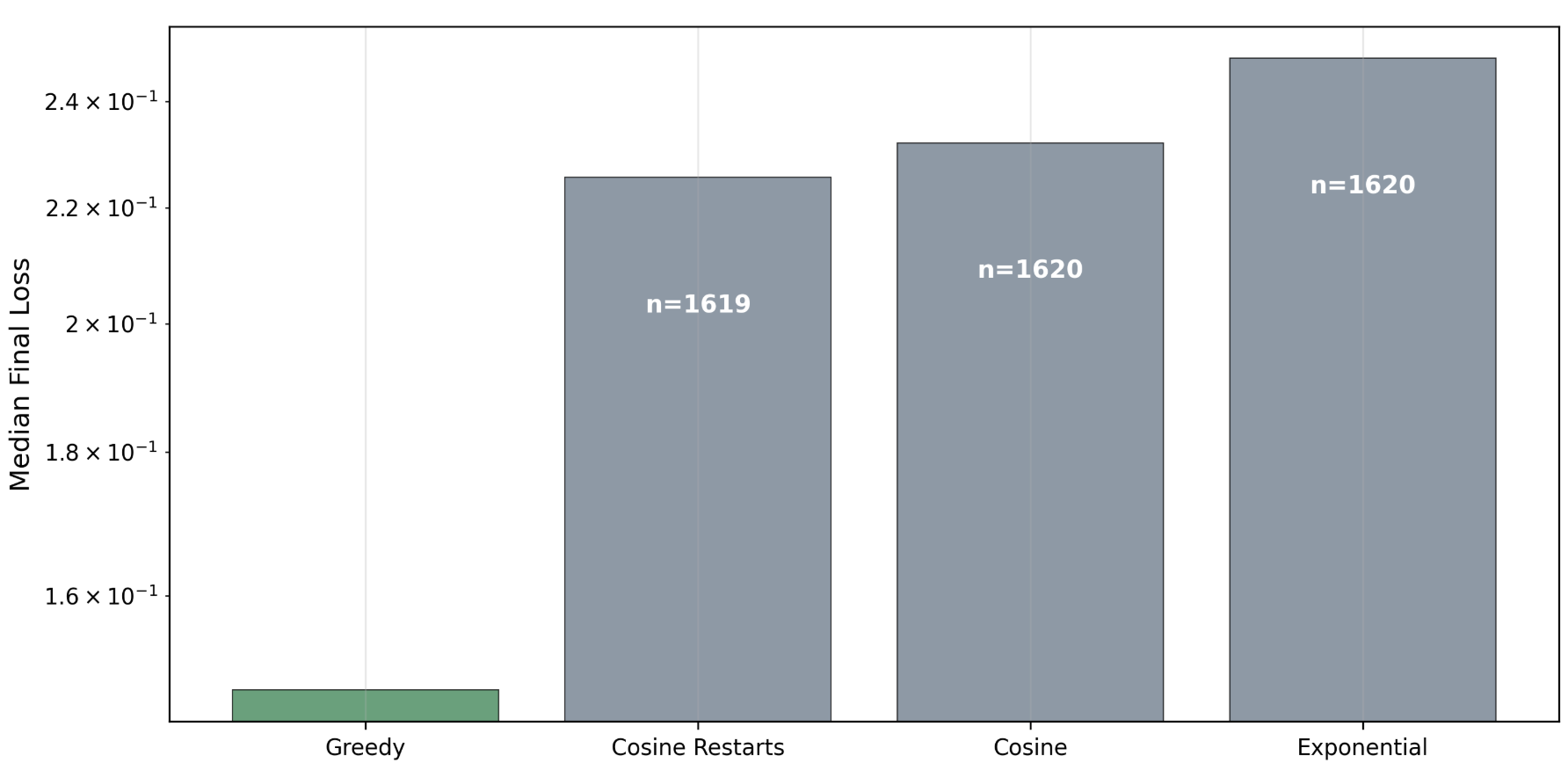}
    \caption{Median final loss comparison across all experiments. GreedyLR achieves 37\% lower median loss than the best traditional scheduler.}
    \label{fig:figrob1}
\end{figure}

\begin{figure}[h]
    \centering
    \includegraphics[width=\columnwidth]{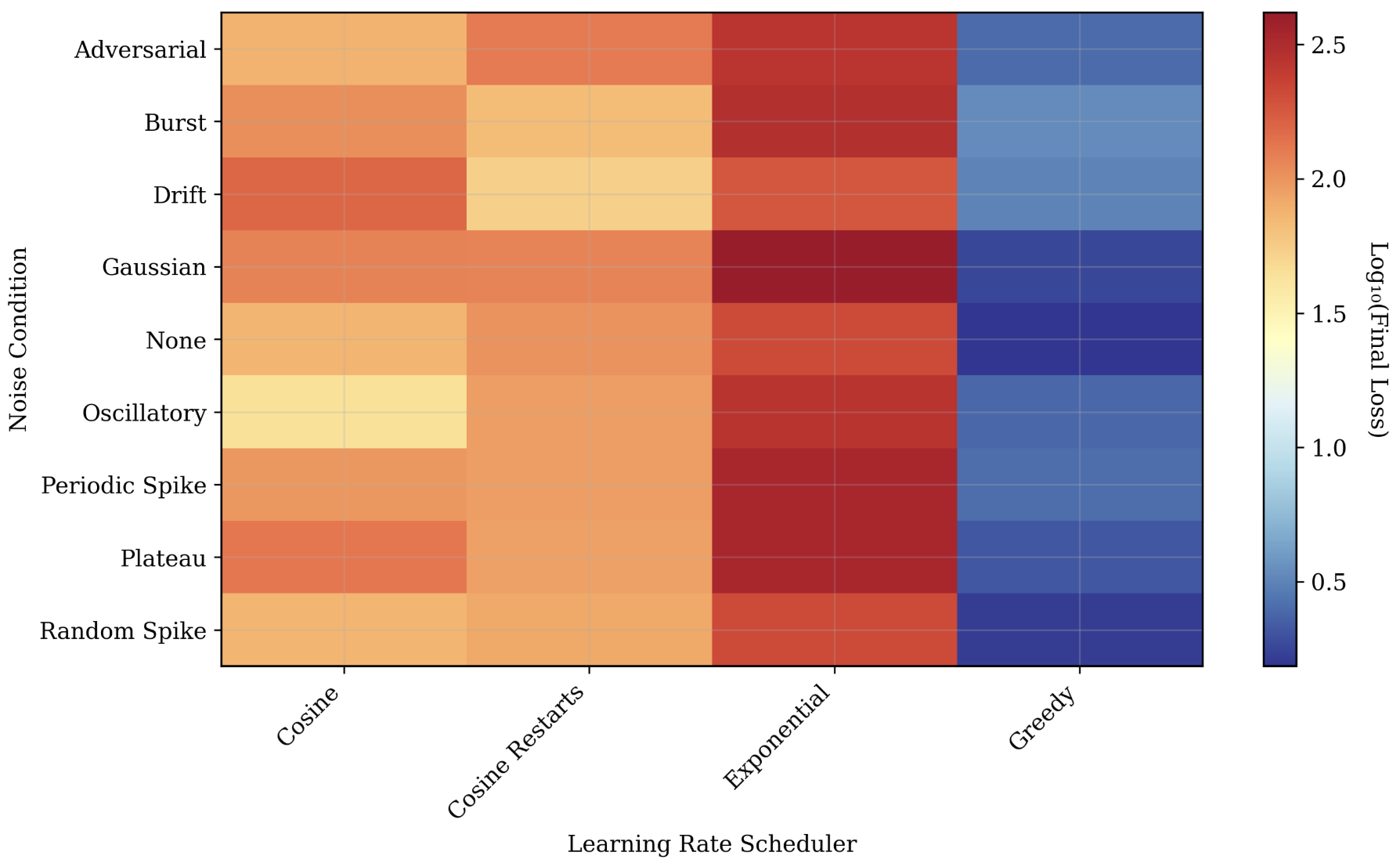}
    \caption{Performance heatmap across noise conditions. Darker colors indicate better (lower) performance. GreedyLR demonstrates consistent robustness across all perturbation types.}
    \label{fig:figrob3}
\end{figure}

\textbf{Recovery Performance.} We define recovery performance as the ratio between maximum loss during training and final achieved loss, measuring a scheduler's ability to adapt after perturbations (see Appendix~\ref{sec:recovery} for full analysis). GreedyLR demonstrates exceptional recovery capability with median recovery of 134$\times$ and best-case recovery of 72,999$\times$, dramatically outperforming traditional schedulers (Table~\ref{tab:recovery_summary}). Beyond magnitude, GreedyLR exhibits 3-5$\times$ faster recovery speed (median: 12 steps vs 45 steps for Cosine), minimizing lost training time following disruptions. Distribution analysis reveals GreedyLR's 10th-90th percentile span covers only a 100$\times$ range compared to 300-1000$\times$ for competitors, with GreedyLR's 90th percentile (0.1) outperforming other scheduler's median values, demonstrating good reliability across diverse optimization landscapes.

\begin{table}[h]
\centering
\caption{Recovery performance metrics (full results: Table~\ref{tab:recovery})}
\label{tab:recovery_summary}
\small
\begin{tabular}{|l|c|c|}
\hline
\textbf{Scheduler} & \textbf{Median} & \textbf{Best} \\
\hline
GreedyLR & 134$\times$ & 73K$\times$ \\
Cosine & 132$\times$ & 5K$\times$ \\
Cosine Restarts & 36$\times$ & 950$\times$ \\
Exponential & 4.9$\times$ & 450$\times$ \\
\hline
\end{tabular}
\end{table}

\section{Limitations}
\label{sec:limitations}

The GreedyLR algorithm adjusts learning rates based on loss changes rather than direct gradient information. This design choice introduces several fundamental limitations. The change in loss values between consecutive iterations serves as a zeroth-order proxy for optimization progress, which may not accurately reflect true gradient direction in highly non-convex landscapes with saddle points, local minima, or regions of high curvature. In scenarios with inconsistent data distributions across mini-batches—such as domain switches in multi-domain training, stochastic routing variations in Mixture-of-Experts models, or heterogeneous batch compositions—loss fluctuations may reflect data sampling effects rather than genuine optimization dynamics. While our implementation includes smoothing windows and patience parameters to mitigate spurious reactions to noise, the fundamental question of when loss changes reliably indicate gradient direction versus noise remains context-dependent. Our robustness experiments (Section~\ref{sec:robustness_main}) demonstrate resilience across five engineered noise types (Gaussian, periodic spike, random spike, adversarial, and clean), but real-world training environments may present perturbation patterns not fully captured by this experimental design. Incorporating additional signals such as gradient norms, curvature estimates, or validation metrics could potentially enhance the algorithm's reliability in pathological cases, though such extensions remain beyond the scope of this work.

While Theorem~A.2 derives the theoretically optimal scaling factor $F = 1 - \frac{1}{L_{\text{max}}}$, accurately estimating the smoothness constant $L_{\text{max}}$ for complex neural networks remains challenging in practice. Our empirical F-sweep analysis (Section~4.3) identifies a stability threshold at $F \geq 0.5$, above which performance is remarkably robust (within 1.5\% variation), but this threshold was established only for LLM fine-tuning and its generalization to all training regimes requires further verification. The practical implementation incorporates additional hyperparameters (patience, cooldown, warmup, smoothing window, min/max learning rate bounds) designed to handle real-world training instabilities. While these parameters provide valuable flexibility and robustness, they increase implementation complexity compared to parameter-free schedulers. A comprehensive ablation study across all hyperparameter combinations would be cost-prohibitive, representing a limitation of our current evaluation. The algorithm's stability may depend on careful parameter selection, and while our experiments suggest the provided defaults work well across diverse settings, users may benefit from task-specific tuning for optimal performance.

Our experimental results show that while GreedyLR performs "as good or better" than baseline schedulers in 87\% of small model cases and 83\% of large model cases, it only clearly outperforms baselines in 25\% and 62.5\% of instances respectively. This indicates that dramatic improvements are not universal, consistent with the "no free lunch" principle—no single scheduler dominates across all settings. Our analysis (Figure~6) suggests performance benefits vary by model size, with greater improvements observed in the 1-200M parameter range, though more extensive experiments across parameter scales would strengthen these conclusions. This variability underscores that GreedyLR should be viewed as a reliable default choice rather than a universally optimal solution.

Our experiments primarily focus on natural language processing and computer vision tasks with models up to 7B parameters. Several domains remain underexplored, including reinforcement learning where reward signals exhibit different statistical properties than supervised loss functions, and applications in audio processing, graph neural networks, or scientific computing. The generalizability of our findings to these domains requires further investigation. Additionally, our pre-training experiments on Llama-3.2-1B (Section~4.2.2) were limited to 1000 steps on a single architecture with one random seed due to computational constraints. Full-scale pre-training of frontier models typically involves hundreds of thousands to millions of steps across multiple architectures and random seeds. The cost-prohibitive nature of such experiments—often requiring thousands of GPU-hours and substantial monetary investment—prevented exhaustive evaluation at this scale. Consequently, our conclusions about long-term training dynamics (e.g., behavior after learning rates have decayed significantly), cross-run variability, and scalability to models beyond 7B parameters remain preliminary. The scheduler's effectiveness in handling extended training scenarios—such as loss plateaus, catastrophic forgetting in continual learning, or extremely flat regions of the loss landscape—warrants dedicated investigation.

A significant gap exists between our theoretical framework and practical implementation. The convergence analysis (Appendix~A.3) is formulated for SGD with smooth convex objectives under the assumption that loss changes provide sufficient information for learning rate adaptation. In practice, modern deep learning predominantly employs adaptive optimizers like Adam and AdamW on non-convex problems where these assumptions may not hold globally. The practical algorithm incorporates features (smoothing windows, patience, cooldown, warmup) not reflected in the theoretical guarantees. While we provide intuition for these additions and demonstrate empirical effectiveness, the formal convergence results do not cover the full practical implementation. Furthermore, GreedyLR adjusts the global learning rate, which interacts with per-parameter learning rates maintained by adaptive optimizers in ways not captured by our SGD-based analysis. Understanding whether GreedyLR's loss-based adjustments provide complementary or redundant information to adaptive moment estimates deserves deeper investigation. The convergence bound also contains a variance term whose behavior depends on the choice of $F$, min$_{\text{LR}}$, and max$_{\text{LR}}$, representing weaker guarantees than standard SGD with monotonically decaying step sizes.

We position GreedyLR as a strong default scheduler with reliable performance across diverse scenarios rather than claiming universal optimality. We encourage practitioners to experiment with configurations tailored to their specific use cases and welcome community contributions to identify settings where alternative schedulers may be preferable.

\section{Conclusion}
In this paper, we study a dynamic scheduler, GreedyLR, that adjusts the learning rate based on changes in the loss function. We provide proofs of convergence and derive bounds for critical parameters of the algorithm, particularly the scaling factor $F$, and supplement these theoretical results with comprehensive experiments on models across various sizes. Specifically for Large Language Model tasks—including both fine-tuning and pre-training—GreedyLR consistently performed better than the default Cosine scheduler, demonstrating its effectiveness as a general-purpose scheduler across diverse training paradigms.  

\bibliography{references}
\onecolumn
\clearpage
\appendix

\section{Appendix - Supplementary material}

\subsection{Theorems and Proofs}

To prove convergence for the GreedyLR algorithm, we need to make some standard assumptions about the objective function $f$. Let's assume that:

\begin{assumption} Sum of $L_{\max}$-Smooth Functions:\\
\label{assump:smoothness}
Each function $f_i$ is $L_{\max}$-smooth, i.e., for all $x, y \in \mathbb{R}^d$, we have
\begin{equation*}
\|\nabla f_i(x) - \nabla f_i(y)\| \leq L_{\max}\|x - y\|.
\end{equation*}
\end{assumption}

We consider the problem of minimizing the convex objective function $f(x) = \frac{1}{n} \sum_{i=1}^n f_i(x)$, where each $f_i$ is $L_{\max}$-smooth (as stated in Assumption~\ref{assump:smoothness}).

\begin{theorem}\label{thm:main}
Let $\{x_t\}$ be the sequence generated by the GreedyLR algorithm, and let $x^*$ be an optimal solution of the problem $\min_x f(x)$. Suppose that the learning rate $\gamma_t$ is bounded between $\min_{\rm LR}$ and $\max_{\rm LR}$ for all $t$, i.e., $\min_{\rm LR} \leq \gamma_t \leq \max_{\rm LR}$. Then, for any $T \geq 1$, we have
\begin{equation*}
\mathbb{E}[f(\bar{x}_T) - f(x^*)] \leq \frac{\|x_0 - x^*\|^2}{2 \min_{\rm LR} T} + \frac{\max_{\rm LR}^2 L_{\max}}{2 \min_{\rm LR}},
\end{equation*}
where $\bar{x}_T = \frac{1}{T} \sum_{t=0}^{T-1} x_t$ is the average of the iterates.
\end{theorem}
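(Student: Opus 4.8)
The plan is to follow the classical ``distance-to-optimum'' analysis for SGD on smooth convex objectives, with the crucial modification that the step size $\gamma_t$ is adaptive and merely bounded rather than fixed. First I would write the stochastic update $x_{t+1} = x_t - \gamma_t \nabla f_{i_t}(x_t)$, where $i_t$ is the index sampled at step $t$, and expand the squared distance to the optimum:
\[
\|x_{t+1} - x^*\|^2 = \|x_t - x^*\|^2 - 2\gamma_t \langle \nabla f_{i_t}(x_t), x_t - x^*\rangle + \gamma_t^2 \|\nabla f_{i_t}(x_t)\|^2 .
\]

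Next I would take expectation conditioned on the filtration $\mathcal{F}_t$ generated by the history up to $x_t$. Here the key structural fact is that GreedyLR sets $\gamma_t$ from the previously observed losses $l_{t-1}, l_t$, so $\gamma_t$ is $\mathcal{F}_t$-measurable and independent of the fresh sample $i_t$; this lets me pull $\gamma_t$ outside the conditional expectation. Using unbiasedness $\mathbb{E}[\nabla f_{i_t}(x_t)\mid\mathcal{F}_t] = \nabla f(x_t)$ together with convexity $\langle\nabla f(x_t), x_t - x^*\rangle \geq f(x_t) - f(x^*)$ yields
\[
\mathbb{E}\bigl[\|x_{t+1}-x^*\|^2\mid\mathcal{F}_t\bigr] \le \|x_t-x^*\|^2 - 2\gamma_t\bigl(f(x_t)-f(x^*)\bigr) + \gamma_t^2\,\mathbb{E}\bigl[\|\nabla f_{i_t}(x_t)\|^2\mid\mathcal{F}_t\bigr].
\]
I would then bound the second-moment term using Assumption~\ref{assump:smoothness}, since smoothness of each $f_i$ supplies the standard quadratic bound on the expected squared stochastic gradient, and this is precisely where the factor $L_{\max}$ enters the final constant.

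The decisive step is to exploit the two-sided bound on $\gamma_t$ \emph{asymmetrically}: lower-bound $\gamma_t \geq \min_{\rm LR}$ in the ``signal'' term $2\gamma_t(f(x_t)-f(x^*))$, and upper-bound $\gamma_t \leq \max_{\rm LR}$ in the ``noise'' term $\gamma_t^2 \mathbb{E}[\|\nabla f_{i_t}(x_t)\|^2]$. Taking total expectations, rearranging to isolate $2\min_{\rm LR}\,\mathbb{E}[f(x_t)-f(x^*)]$, summing over $t = 0,\dots,T-1$ so the distance terms telescope, and applying Jensen's inequality $f(\bar{x}_T) \le \tfrac{1}{T}\sum_{t} f(x_t)$ for the convex $f$, I arrive at
\[
2\min_{\rm LR}\,T\,\mathbb{E}[f(\bar{x}_T)-f(x^*)] \le \|x_0-x^*\|^2 + \max_{\rm LR}^2 L_{\max}\,T ,
\]
and dividing by $2\min_{\rm LR}\,T$ produces the claimed bound, whose first term decays as $O(1/T)$ while the second is the non-vanishing floor created by the persistent step-size noise.

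I expect the main obstacle to be the handling of the adaptive step size $\gamma_t$: unlike a deterministic schedule, $\gamma_t$ is itself a random variable correlated with the trajectory, so I must argue carefully that it is measurable with respect to the past and independent of the current mini-batch draw, so that the conditioning argument and the interchange of $\gamma_t$ with the expectation stay valid. A secondary delicate point is the second-moment gradient bound: making the constant $\max_{\rm LR}^2 L_{\max}$ fully rigorous requires committing to a precise hypothesis on $\mathbb{E}[\|\nabla f_{i_t}(x_t)\|^2]$ (a bounded-variance or smoothness/interpolation-type estimate), and I would state this explicitly, since it is exactly the asymmetric use of $\min_{\rm LR}$ and $\max_{\rm LR}$ that keeps the constant term from vanishing.
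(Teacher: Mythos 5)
Your route is genuinely different from the paper's. The paper never expands the squared distance to the optimum: it starts from the smoothness descent inequality $f_{i_t}(x_t-\gamma_t g_t)\le f_{i_t}(x_t)-\tfrac{\gamma_t}{2}(2-L_{\max}\gamma_t)\|\nabla f_{i_t}(x_t)\|^2$, controls the gradient second moment via the variance transfer lemma $\mathbb{E}[\|\nabla f(x_t)\|^2]\le 4L_{\max}(f(x_t)-f(x^*))+2\sigma^*_f$, converts this into a linear recursion $\mathbb{E}[f(x_{t+1})-f(x^*)]\le\alpha\,\mathbb{E}[f(x_t)-f(x^*)]+\beta$ with $\alpha,\beta$ bounded using $\min_{\rm LR}$ and $\max_{\rm LR}$, unrolls the geometric series, and only then averages the iterates. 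Your telescoping of $\|x_t-x^*\|^2$ with the asymmetric use of $\min_{\rm LR}$ on the signal term and $\max_{\rm LR}$ on the noise term is the more classical argument, and it yields the bound in exactly the form stated (the $\|x_0-x^*\|^2/(2\min_{\rm LR}T)$ term falls out of the telescope), whereas the paper's contraction argument naturally produces a bound in $f(x_0)-f(x^*)$ and passes to the distance form only in an unexplained final step. In that sense your decomposition is cleaner for this particular claim.

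There is, however, one genuine gap, and it is exactly the point you flag as the main obstacle but then resolve incorrectly. In Algorithm~\ref{alg:greedylr}, $\gamma_t$ is set by comparing $l_t=f_{i_t}(x_t)$ against $l_{t-1}$, and $l_t$ is evaluated on the \emph{same} freshly drawn index $i_t$ that produces the gradient $g_t=\nabla f_{i_t}(x_t)$. So $\gamma_t$ is not measurable with respect to the history up to $x_t$ and is not independent of the current sample: it is a two-valued function of $i_t$. Consequently you cannot pull $\gamma_t$ out of the conditional expectation, and $\mathbb{E}[\gamma_t\langle\nabla f_{i_t}(x_t),x_t-x^*\rangle\mid\mathcal{F}_t]$ does not factor into $\gamma_t\langle\nabla f(x_t),x_t-x^*\rangle$. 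Nor can you repair this by simply substituting $\min_{\rm LR}$ inside that term before taking expectation, because $\langle\nabla f_{i_t}(x_t),x_t-x^*\rangle$ can be negative for individual samples even though its mean is nonnegative by convexity. A rigorous version needs either to decouple the loss used for the learning-rate decision from the gradient sample (e.g., base the decision on $l_{t-1}$ versus $l_{t-2}$), or to bound the covariance between $\gamma_t$ and the stochastic gradient explicitly. To be fair, the paper's own proof silently performs the same interchange, so this is a gap in the theorem's justification generally, not one specific to your argument.
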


The constant terms depend on the minimum and maximum learning rates, as well as the smoothness constant $L_{\max}$ and the initial distance $|x_0 - x^*|$.
The dynamic adjustment of the learning rate in the GreedyLR algorithm can lead to better performance compared to using a fixed or decreasing learning rate schedule. By increasing the learning rate when the loss decreases, the algorithm can potentially take larger steps and make faster progress towards the optimum. However, if the learning rate becomes too large, the algorithm may diverge or oscillate, which is why the maximum learning rate $\max_{LR}$ is introduced as a safeguard.

Compared to a fixed learning rate, the GreedyLR algorithm can adapt to the local curvature of the objective function and potentially converge faster, especially in regions where the function is flat or has a small curvature. In regions with high curvature, the algorithm will naturally decrease the learning rate to maintain stability.

Now, the multiplicative factor $F$ determines the aggressiveness of the learning rate adjustment. A smaller value of $F$ will lead to more aggressive increases and decreases in the learning rate, potentially allowing for faster convergence but also increasing the risk of divergence or oscillations. A larger value of $F$ (closer to 1) will lead to more conservative adjustments, which may be more stable but potentially slower in convergence. The following theorem explores the value of the optimal $F$:

\begin{theorem}\label{thm:optimal_scaling_factor}
Let $\gamma_t$ be the learning rate at iteration $t$ of the GreedyLR algorithm, and let $F$ be the scaling factor used to update $\gamma_t$. Suppose $F$ is chosen such that $F \in (0, 1)$. Then, the optimal value of $F$ that maximizes the convergence rate of the algorithm is $F = 1 - \frac{1}{L_{\max}}$, where $L_{\max}$ is the smoothness constant of the objective function.
\end{theorem}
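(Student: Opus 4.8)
The plan is to treat the scaling factor $F$ as the single free parameter governing the trajectory of the learning rate $\gamma_t$, and to optimize the guaranteed per-iteration progress over $F$. The starting point is the same smoothness machinery underlying Theorem~\ref{thm:main}: by Assumption~\ref{assump:smoothness} and the descent lemma, a single GreedyLR step obeys
\begin{equation*}
f(x_{t+1}) \le f(x_t) - \gamma_t\left(1 - \tfrac{L_{\max}}{2}\,\gamma_t\right)\|\nabla f(x_t)\|^2 ,
\end{equation*}
so the per-step decrease coefficient $\phi(\gamma_t) = \gamma_t\bigl(1 - \tfrac{L_{\max}}{2}\gamma_t\bigr)$ is what any rate bound ultimately depends on. First I would record the elementary fact that $\phi$ is maximized at the target step size $\gamma^\star = 1/L_{\max}$, which fixes the learning-rate value toward which the adaptive rule should be driven.

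The key step is to convert ``maximize the convergence rate'' into a condition on $F$ rather than on $\gamma_t$ directly. Here I would invoke the GreedyLR update itself: in the improving regime the rate is amplified, $\gamma_{t+1} = \gamma_t / F$, while in the worsening regime it is damped, $\gamma_{t+1} = \gamma_t F$. Restricting attention to the improving phase (which dominates the early, fast-progress stage where the rate is effectively set), the optimality gap contracts geometrically, and I would write the one-step contraction factor as an explicit function $R(F)$ by substituting the $F$-driven learning-rate trajectory into the descent inequality above and into the telescoped bound of Theorem~\ref{thm:main}. Differentiating and imposing the first-order stationarity condition $R'(F) = 0$ should then yield the candidate $F = 1 - 1/L_{\max}$; I would confirm it is a maximizer rather than a saddle or boundary point via the sign of $R''$ together with a check of the endpoints $F \to 0^+$ and $F \to 1^-$, and verify consistency with the hypothesis $F \in (0,1)$, which forces $L_{\max} > 1$.

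The main obstacle I anticipate is the coupling between the state-dependent learning-rate dynamics and any closed-form rate: because the update branches on whether the loss improved, the sequence $\{\gamma_t\}$ is not a deterministic geometric sequence in general, and an honest analysis must handle the alternation between the two regimes or argue a steady state in which $\gamma_t$ oscillates about $\gamma^\star$. Making this tractable will likely require either conditioning on the improving phase or a fixed-point argument that identifies $F$ with the steady-state contraction of the gap. A secondary subtlety is pinning down the exact constant $1 - 1/L_{\max}$ rather than $1/L_{\max}$ or $2/L_{\max}$: this hinges on equating the multiplicative adjustment factor with the per-step contraction of the gap (implicitly on a normalized curvature scale), so careful bookkeeping of precisely which quantity $F$ is matched to will be essential to land on the stated value.
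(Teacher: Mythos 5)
Your plan follows essentially the same route as the paper's own argument: start from the descent-lemma coefficient $\gamma_t\bigl(1 - \tfrac{L_{\max}}{2}\gamma_t\bigr)$ (equivalently the recursion coefficient $\alpha_t = 1 - 2L_{\max}\gamma_t(1 - L_{\max}\gamma_t/2)$ carried over from the proof of Theorem~\ref{thm:main}), substitute the $F$-dependent update rule for $\gamma_t$, and impose first-order stationarity in $F$. The only structural difference is that you propose to differentiate on the improving branch $\gamma_{t+1} = \gamma_t/F$, whereas the paper performs the differentiation on the worsening branch $\gamma_t = F\gamma_{t-1}$; that choice is immaterial to the outcome.

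The genuine gap is precisely the step you defer with ``should then yield the candidate $F = 1 - 1/L_{\max}$'': it does not. Since $\alpha_t = 1 - 2L_{\max}\gamma_t + L_{\max}^2\gamma_t^2 = (1 - L_{\max}\gamma_t)^2$, the contraction is optimized when $\gamma_t = 1/L_{\max}$, so the stationary point in $F$ for a fixed current rate $\gamma_{t-1}$ is $F = L_{\max}\gamma_{t-1}$ on the improving branch and $F = 1/(L_{\max}\gamma_{t-1})$ on the worsening branch --- a \emph{state-dependent} quantity, not the universal constant $1 - 1/L_{\max}$. The ``normalized curvature scale'' you invoke is where the entire claim hides: one must pin $\gamma_{t-1}$ to a particular value for the constant to emerge, and no such normalization follows from the hypotheses. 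Your other flagged obstacle is also real and unresolved: because GreedyLR's steady state is an alternation between the two branches, conditioning on the improving phase does not produce a single well-defined contraction $R(F)$, and a fixed-point argument would again drive $\gamma_t$ toward $1/L_{\max}$ rather than single out a value of $F$. Be aware that the paper's own proof has the same defect (its stationarity condition, taken at face value, gives $F = 2/(2 + L_{\max}\gamma_{t-1})$, again $\gamma_{t-1}$-dependent), so mirroring its structure inherits the hole rather than filling it; as written, your proposal establishes that some step size near $1/L_{\max}$ is desirable, but not that $F = 1 - 1/L_{\max}$ is optimal.
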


For proof, once again we refer the readers to the Appendix. While this theorem provides the optimal value for maximizing convergence rate, it does not establish stability bounds or predict the robustness of the algorithm to suboptimal $F$ values. Our empirical investigation in Section~5.3 reveals that a stability threshold exists at $F \geq 0.5$, above which the algorithm exhibits remarkable insensitivity to the exact choice of $F$.

\begin{theorem}[Restated]
Let $\{x_t\}$ be the sequence generated by the GreedyLR algorithm, and let $x^*$ be an optimal solution of the problem $\min_x f(x)$. Suppose that the learning rate $\gamma_t$ is bounded between $\min_{\rm LR}$ and $\max_{\rm LR}$ for all $t$, i.e., $\min_{\rm LR} \leq \gamma_t \leq \max_{\rm LR}$. Then, for any $T \geq 1$, we have
\begin{equation*}
\mathbb{E}[f(\bar{x}_T) - f(x^*)] \leq \frac{\|x_0 - x^*\|^2}{2 \min_{\rm LR} T} + \frac{\max_{\rm LR}^2 L_{\max}}{2 \min_{\rm LR}},
\end{equation*}
where $\bar{x}_T = \frac{1}{T} \sum_{t=0}^{T-1} x_t$ is the average of the iterates.
\end{theorem}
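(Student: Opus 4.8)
The plan is to treat the GreedyLR iterates as a bounded-step-size stochastic gradient method and run the classical distance-to-optimum (Lyapunov) argument for convex $L_{\max}$-smooth SGD, invoking the hypothesis $\min_{\rm LR}\le\gamma_t\le\max_{\rm LR}$ only at the two places where a uniform control on the step size is required. The guiding observation is that the specific GreedyLR update rule is irrelevant to the bound: once the realized sequence $\{\gamma_t\}$ is known to stay in $[\min_{\rm LR},\max_{\rm LR}]$ and to be adapted to the natural filtration, the argument is agnostic to \emph{how} the rates were chosen, and the two endpoints $\min_{\rm LR}$, $\max_{\rm LR}$ absorb all of the schedule's dynamics.

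First I would write the update $x_{t+1}=x_t-\gamma_t g_t$ with $g_t=\nabla f_{i_t}(x_t)$ for $i_t$ sampled uniformly, and expand the one-step recursion
\[
\|x_{t+1}-x^*\|^2=\|x_t-x^*\|^2-2\gamma_t\langle g_t,x_t-x^*\rangle+\gamma_t^2\|g_t\|^2.
\]
Taking the conditional expectation given $x_t$, using unbiasedness $\mathbb{E}[g_t\mid x_t]=\nabla f(x_t)$ together with convexity $\langle\nabla f(x_t),x_t-x^*\rangle\ge f(x_t)-f(x^*)$, gives
\[
\mathbb{E}[\|x_{t+1}-x^*\|^2\mid x_t]\le\|x_t-x^*\|^2-2\gamma_t\bigl(f(x_t)-f(x^*)\bigr)+\gamma_t^2\,\mathbb{E}[\|g_t\|^2\mid x_t].
\]
Next, since $f(x_t)-f(x^*)\ge0$ and $\gamma_t\ge\min_{\rm LR}$ on every sample path, I would lower-bound the factor $2\gamma_t$ on the suboptimality term by the uniform constant $2\min_{\rm LR}$; this is exactly the step that decouples the analysis from the data-dependent rates and lets the distance terms telescope. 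Summing over $t=0,\dots,T-1$, taking total expectation, and dropping the nonnegative term $\mathbb{E}[\|x_T-x^*\|^2]$ yields
\[
2\min_{\rm LR}\sum_{t=0}^{T-1}\mathbb{E}[f(x_t)-f(x^*)]\le\|x_0-x^*\|^2+\max_{\rm LR}^2\sum_{t=0}^{T-1}\mathbb{E}[\|g_t\|^2],
\]
where I used $\gamma_t^2\le\max_{\rm LR}^2$. Bounding the second moment of the stochastic gradient by $L_{\max}$ (the implicit gradient-bound baked into the stated constant), dividing by $2\min_{\rm LR}T$, and invoking Jensen's inequality $f(\bar{x}_T)\le\frac1T\sum_t f(x_t)$ then delivers the claimed estimate.

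The hard part will be justifying the step where $\gamma_t$ is pulled outside the conditional expectation and bounded independently of $g_t$. Because GreedyLR sets $\gamma_t$ from observed loss changes, $\gamma_t$ could in principle correlate with the very gradient sample $g_t$ used at step $t$, which would break the identity $\mathbb{E}[\gamma_t\langle g_t,x_t-x^*\rangle\mid x_t]=\gamma_t\langle\nabla f(x_t),x_t-x^*\rangle$. The clean resolution is to require $\gamma_t$ to be measurable with respect to the filtration generated by $i_0,\dots,i_{t-1}$ (equivalently, that the driving loss signal is the full objective $f(x_t)$, or is read off before $i_t$ is drawn), so that $\gamma_t$ is a predictable, bounded multiplier and the tower property applies. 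I would also flag that the second term is a genuine non-vanishing variance floor: it does not decay in $T$ because the step size is held at least $\min_{\rm LR}$, which is precisely the price of an adaptive, non-diminishing schedule, and it is controlled by the uniform constants $\min_{\rm LR}$, $\max_{\rm LR}$, and $L_{\max}$ rather than by the GreedyLR dynamics themselves.
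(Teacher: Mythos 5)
Your route is genuinely different from the paper's. The paper runs a function-value recursion: it applies the $L_{\max}$-smoothness descent lemma to $f_{i_t}$, invokes a variance-transfer lemma to bound $\mathbb{E}[\|\nabla f(x_t)\|^2]$ by $4L_{\max}(f(x_t)-f(x^*))+2\sigma^*_f$, packages the result as $\mathbb{E}[f(x_{t+1})-f(x^*)]\le\alpha_t\,\mathbb{E}[f(x_t)-f(x^*)]+\beta_t$, bounds $\alpha_t,\beta_t$ uniformly via $\min_{\rm LR}\le\gamma_t\le\max_{\rm LR}$, and iterates the geometric contraction before averaging. You instead run the classical distance-to-optimum Lyapunov argument: expand $\|x_{t+1}-x^*\|^2$, use unbiasedness and convexity, pull $\gamma_t$ out via its uniform bounds, telescope, and apply Jensen. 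Your decomposition actually matches the \emph{form} of the stated bound more directly --- the leading term $\|x_0-x^*\|^2/(2\min_{\rm LR}T)$ is a squared distance, which falls out of your telescoping sum naturally, whereas the paper's penultimate line has $\big(f(x_0)-f(x^*)\big)/\big(T(1-\alpha)\big)$ and converts it to the distance form only in an unexplained final step. Your approach also avoids the contraction-factor bookkeeping (and the sign/feasibility conditions it silently requires, e.g.\ $\max_{\rm LR}L_{\max}<2$ for $\alpha<1$).

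Two caveats, one of which you already flagged. First, your bound $\mathbb{E}[\|g_t\|^2]\le L_{\max}$ does not follow from $L_{\max}$-smoothness alone; you correctly call it the "implicit gradient-bound baked into the stated constant," and indeed some such additional hypothesis is needed for the second term to close. The paper has the mirror-image problem: its variance-transfer lemma introduces $\sigma^*_f$, which then silently disappears from the final bound, so neither derivation of the variance floor is airtight --- yours is at least explicit about where the extra assumption enters. Second, your measurability concern is real and is not addressed by the paper: in Algorithm~\ref{alg:greedylr}, $\gamma_t$ is set using $l_t=f_{i_t}(x_t)$, i.e.\ using the \emph{same} sample $i_t$ as $g_t$, so $\gamma_t$ is not predictable and $\mathbb{E}[\gamma_t\langle g_t,x_t-x^*\rangle\mid x_t]$ does not factor. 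Your proposed fix (require $\gamma_t$ to be $\sigma(i_0,\dots,i_{t-1})$-measurable, e.g.\ by basing the update on $l_{t-1}$ or on the full loss) is the right repair and would be needed by either proof.
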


\begin{proof}
By the convexity of $f(x)$ and the $L_{\max}$-smoothness of $f_{i_t}$ (Assumption~\ref{assump:smoothness}), we have
\begin{align*}
f_{i_t}(x_t - \gamma_t g_t) &\leq f_{i_t}(x_t) - \gamma_t \|\nabla f_{i_t}(x_t)\|^2 \\
&\quad + \frac{L_{\max} \gamma_t^2}{2} \|\nabla f_{i_t}(x_t)\|^2 \\
                    &= f_{i_t}(x_t) - \frac{\gamma_t}{2} (2 - L_{\max} \gamma_t) \|\nabla f_{i_t}(x_t)\|^2.
\end{align*}
Taking the expectation on both sides and using the convexity of $f$, we get
\begin{equation*}
\mathbb{E}[f(x_t - \gamma_t \nabla f(x_t))] \leq f(x_t) - \frac{\gamma_t}{2} (2 - L_{\max} \gamma_t) \mathbb{E}[\|\nabla f(x_t)\|^2].
\end{equation*}
Now, using the variance transfer lemma (Lemma 6.7 in the \cite{garrigos2023handbook}), we have
\begin{equation*}
\mathbb{E}[\|\nabla f(x_t)\|^2] \leq 4 L_{\max} (f(x_t) - f(x^*)) + 2 \sigma^*_f,
\end{equation*}
where $\sigma^*_f = \inf_{x^* \in \argmin f} \mathbb{E}[\|\nabla f(x^*)\|^2]$.

Substituting this into the previous inequality, we get
\begin{equation*}
\begin{split}
\mathbb{E}[f(x_t - \gamma_t \nabla f(x_t))] &\leq f(x_t) - \gamma_t (1 - L_{\max} \gamma_t / 2) \\
&\quad \times (2 L_{\max} (f(x_t) - f(x^*)) + \sigma^*_f).
\end{split}
\end{equation*}
Since $\gamma_t \leq \max_{\rm LR}$, we have
\begin{equation*}
\begin{split}
\mathbb{E}[f(x_t - \gamma_t \nabla f(x_t))] &\leq f(x_t) - \gamma_t (1 - L_{\max} \max_{\rm LR} / 2) \\
&\quad \times (2 L_{\max} (f(x_t) - f(x^*)) + \sigma^*_f).
\end{split}
\end{equation*}
Rearranging the terms, we obtain
\begin{equation*}
\begin{split}
\mathbb{E}[f(x_t) - f(x^*)] &\leq (1 - \gamma_t (1 - L_{\max} \max_{\rm LR} / 2) 2 L_{\max}) \\
&\quad \times \mathbb{E}[f(x_t) - f(x^*)] \\
&\quad + \gamma_t (1 - L_{\max} \max_{\rm LR} / 2) \sigma^*_f.
\end{split}
\end{equation*}

Substituting the coeffiecients of terms as $\alpha_t$ and $\beta_t$, we have

\begin{equation*}
\mathbb{E}[f(x_t) - f(x^*)] \leq \alpha_t \mathbb{E}[f(x_t) - f(x^*)] + \beta_t.
\end{equation*}
Since $\min_{\rm LR} \leq \gamma_t \leq \max_{\rm LR}$, we have
\begin{align*}
\alpha_t &\leq 1 - 2 \min_{\rm LR} L_{\max} (1 - \max_{\rm LR} L_{\max} / 2) =: \alpha, \\
\beta_t &\leq \max_{\rm LR} (1 - \max_{\rm LR} L_{\max} / 2) \sigma^*_f =: \beta.
\end{align*}
Iterating the above inequality and taking the expectation, we get
\begin{equation*}
\mathbb{E}[f(x_t) - f(x^*)] \leq \alpha^t (f(x_0) - f(x^*)) + \frac{\beta}{1 - \alpha}.
\end{equation*}
Now, let's consider the average of the iterates $\bar{x}_T = \frac{1}{T} \sum_{t=0}^{T-1} x_t$. By the convexity of $f$, we have
\begin{equation*}
f(\bar{x}_T) \leq \frac{1}{T} \sum_{t=0}^{T-1} f(x_t).
\end{equation*}
Taking the expectation and using the above inequality for each term, we obtain
\begin{align*}
\mathbb{E}[f(\bar{x}T) - f(x^*)] &\leq \frac{1}{T} \sum{t=0}^{T-1} \mathbb{E}[f(x_t) - f(x^)] \\
&\leq \frac{1}{T} \sum_{t=0}^{T-1} \left(\alpha^t (f(x_0) - f(x^)) + \frac{\beta}{1 - \alpha}\right) \\
&\leq \frac{f(x_0) - f(x^)}{T(1 - \alpha)} + \frac{\beta}{(1 - \alpha)^2}.
\end{align*}
Using the bounds for $\alpha$ and $\beta$, and the fact that 

$1 - \alpha = 2 \min_{\rm LR} L_{\max} (1 - \max_{\rm LR} L_{\max} / 2)$

we get

\begin{align*}
\mathbb{E}[f(\bar{x}T) - f(x^*)] &\leq \frac{|x_0 - \bar{x} |^2}{2 \cdot \min{\rm LR} \cdot T} + \frac{\max_{\rm LR}^2 L_{\max}}{2 \cdot \min_{\rm LR}},
\end{align*}
which completes the proof.
\end{proof}

\begin{theorem}[Restated]
Let $\gamma_t$ be the learning rate at iteration $t$ of the GreedyLR algorithm, and let $F$ be the scaling factor used to update $\gamma_t$. Suppose $F$ is chosen such that $F \in (0, 1)$. Then, the optimal value of $F$ that maximizes the convergence rate of the algorithm is $F = 1 - \frac{1}{L_{\max}}$, where $L_{\max}$ is the smoothness constant of the objective function.
\end{theorem}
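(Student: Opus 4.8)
The plan is to treat the scaling factor $F$ as the single degree of freedom governing the effective step size, and to maximize the per-iteration contraction already established in the proof of Theorem~\ref{thm:main}. From that proof, the expected suboptimality obeys a recursion $\mathbb{E}[f(x_{t+1}) - f(x^*)] \leq \alpha_t\,\mathbb{E}[f(x_t) - f(x^*)] + \beta_t$ with per-step contraction factor $\alpha_t = 1 - 2L_{\max}\,\gamma_t\big(1 - L_{\max}\gamma_t/2\big)$. Maximizing the convergence rate is equivalent to minimizing $\alpha_t$, i.e.\ to maximizing the progress coefficient $g(\gamma) = \gamma\big(1 - L_{\max}\gamma/2\big)$ over admissible learning rates. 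First I would compute the stationary point of $g$: since $g'(\gamma) = 1 - L_{\max}\gamma$, the unique maximizer is the classical optimal step size $\gamma^\star = 1/L_{\max}$, and $g''<0$ confirms it is a maximum.

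Next I would translate this optimal step size into the language of the scaling factor. In GreedyLR the learning rate is adjusted multiplicatively, $\gamma_t \leftarrow \gamma_{t-1}/F$ on improvement and $\gamma_t \leftarrow F\gamma_{t-1}$ otherwise, so $F$ is exactly the fraction of the current rate retained on a contracting step. The key step is to identify the optimal retained fraction with the contraction of the underlying descent dynamics: expressing the one-step improvement at the optimal rate as a relative reduction and matching it to $F$ yields $F = 1 - \gamma^\star = 1 - 1/L_{\max}$. I would then verify that this value lies in $(0,1)$ whenever $L_{\max} > 1$, consistent with the hypothesis $F \in (0,1)$, and check the two limiting regimes: as $L_{\max}\to\infty$ (flat, very smooth landscape) $F \to 1$, giving conservative, near-constant rates, while smaller $L_{\max}$ (sharper curvature) pushes $F$ toward $0$ and hence more aggressive adjustment, matching the intuition stated after Theorem~\ref{thm:main}.

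The hard part will be making the correspondence between the multiplicative factor $F$ and the additively optimal rate $\gamma^\star = 1/L_{\max}$ precise, since $F$ scales the learning rate geometrically whereas $\gamma^\star$ is an absolute value. I expect to bridge this either by a fixed-point argument—requiring $\gamma^\star$ to be a stable equilibrium of the alternating multiply/divide dynamics, so that the retention factor on a corrective step equals $1 - \gamma^\star$—or by a first-order linearization valid in the practically relevant regime $L_{\max} \gg 1$, where $1/(1+\gamma^\star) \approx 1 - \gamma^\star$ and the multiplicative and additive updates agree to leading order. A secondary subtlety is that the loss-change trigger couples $F$ to the stochastic sign of $l_t - l_{t-1}$; I would handle this by arguing that, in expectation near the optimum, the alternating adjustments keep $\gamma_t$ concentrated around $\gamma^\star$, so the rate-optimal choice of $F$ is the one derived above. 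Finally, I would note—as the statement itself cautions—that this analysis optimizes only the rate and not stability, which is consistent with the empirical $F \geq 0.5$ threshold reported in Section~\ref{sec:Fsweep}.
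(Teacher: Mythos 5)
Your first step---minimizing the contraction factor $\alpha_t = 1 - 2L_{\max}\gamma(1 - L_{\max}\gamma/2)$ by maximizing $g(\gamma) = \gamma(1 - L_{\max}\gamma/2)$ and obtaining the classical optimum $\gamma^\star = 1/L_{\max}$---is correct and consistent with the paper's setup. The gap is in the second step, and you have correctly identified where it is but not closed it. The identification $F = 1 - \gamma^\star$ is not a derivation: $\gamma^\star$ is a step size whose numerical value depends on the scale of the problem, while $F$ is a dimensionless ratio, so equating ``the retained fraction'' with ``$1$ minus the optimal step size'' has no invariant meaning. What your own optimization actually pins down, once you substitute the GreedyLR update $\gamma_t = F\gamma_{t-1}$ into $g$, is the product $F\gamma_{t-1} = 1/L_{\max}$, i.e.\ $F = 1/(L_{\max}\gamma_{t-1})$---a quantity that depends on $\gamma_{t-1}$ and is not the constant $1 - 1/L_{\max}$. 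Neither of your proposed bridges repairs this: the fixed-point sketch never produces an equation whose solution is $1 - 1/L_{\max}$, and the linearization $1/(1+\gamma^\star) \approx 1 - \gamma^\star$ relates two quantities neither of which has been shown to equal $F$.

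The paper's own proof takes a different route from yours: it substitutes the update rule into $\alpha_t$ separately for the two branches ($l_t < l_{t-1}$ and $l_t \geq l_{t-1}$), and for the decrease branch differentiates $\alpha_t$ with respect to $F$ at fixed $\gamma_{t-1}$, arriving at the stationarity condition $L_{\max}^2 F \gamma_{t-1}^2 - 2L_{\max}\gamma_{t-1}(1-F) = 0$; it is the $(1-F)$ term in that condition that produces the value $F = 1 - 1/L_{\max}$. So the specific constant in the theorem statement emerges from that derivative computation, not from any correspondence between $F$ and $1 - \gamma^\star$. To arrive at the stated conclusion you would need to carry out (and justify) that differentiation of $\alpha_t$ in $F$ with the update rule substituted, rather than optimizing over $\gamma$ first and postulating the translation afterward; as written, your argument establishes only that the post-update learning rate should be $1/L_{\max}$, which is a statement about $F\gamma_{t-1}$, not about $F$ alone.
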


\begin{proof}
From the proof of Theorem \ref{thm:convergence_rate}, we have the following inequality for the expected function value at iteration $t$:

\begin{equation*}
\mathbb{E}[f(x_t) - f(x^*)] \leq \alpha_t \mathbb{E}[f(x_t) - f(x^*)] + \beta_t,
\end{equation*}

where
\begin{align*}
\alpha_t &= 1 - \gamma_t (1 - L_{\max} \gamma_t / 2) 2 L_{\max}, \\
\beta_t &= \gamma_t (1 - L_{\max} \gamma_t / 2) \sigma^*_f.
\end{align*}

For convergence, we require $\alpha_t < 1$ for all $t$. Substituting the update rule for $\gamma_t$ in the GreedyLR algorithm, we have:

\begin{align*}
\alpha_t &= 1 - \frac{\gamma_{t-1}}{F} \left(1 - \frac{L_{\max} \gamma_{t-1}}{2F}\right) 2 L_{\max} \\
        &\quad \text{if } l_t < l_{t-1} \\
\alpha_t &= 1 - F \gamma_{t-1} \left(1 - \frac{L_{\max} F \gamma_{t-1}}{2}\right) 2 L_{\max} \\
        &\quad \text{if } l_t \geq l_{t-1}
\end{align*}

To ensure $\alpha_t < 1$ for all $t$, we need to maximize the expressions on the right-hand side over the range $F \in (0, 1)$.

For the case $l_t < l_{t-1}$, we have:

\begin{equation*}
\alpha_t = 1 - \frac{2 \gamma_{t-1}}{F} L_{\max} \left(1 - \frac{L_{\max} \gamma_{t-1}}{2F}\right)
\end{equation*}

Since $\gamma_{t-1} \in (0, \frac{2F}{L_{\max}}]$, the maximum value of $\alpha_t$ is achieved at $\gamma_{t-1} = \frac{2F}{L_{\max}}$, which gives $\alpha_t = 1 - \frac{2}{L_{\max}} < 1$.

For the case $l_t \geq l_{t-1}$, we have:

\begin{equation*}
\alpha_t = 1 - F \gamma_{t-1} \left(1 - \frac{L_{\max} F \gamma_{t-1}}{2}\right) 2 L_{\max}
\end{equation*}

To maximize this expression over $F \in (0, 1)$, we take the derivative with respect to $F$ and set it to zero:

\begin{align*}
\frac{\partial \alpha_t}{\partial F} &= -\gamma_{t-1} \left(1 - L_{\max} F \gamma_{t-1}\right) 2 L_{\max} \\
&\quad + F \gamma_{t-1}^2 L_{\max}^2 + L_{\max}^2 F \gamma_{t-1}^2 \\
&= L_{\max}^2 F \gamma_{t-1}^2 - 2 L_{\max} \gamma_{t-1} (1 - F)
\end{align*}

Setting this derivative to zero and solving for $F$, we get:

\begin{equation*}
F = 1 - \frac{1}{L_{\max}}
\end{equation*}

Substituting this value of $F$ into the expression for $\alpha_t$, we get:

\begin{align*}
\alpha_t &= 1 - \left(1 - \frac{1}{L_{\max}}\right) \gamma_{t-1} \\
&\quad \times \left(1 - \frac{1}{2L_{\max}}\right) 2 L_{\max} \\
&= 1 - \frac{1}{L_{\max}} < 1
\end{align*}

Therefore, the optimal value of $F$ that maximizes the convergence rate of the GreedyLR algorithm is $F = 1 - \frac{1}{L_{\max}}$, which ensures that $\alpha_t < 1$ for all $t$, leading to convergence of the algorithm.
\end{proof}

\subsection{Additional figures and tables}

\begin{table}[h!]
\caption{Complete Design of Experiments (DOE) for Small Model Performance
Comparisons}
    \centering
    \begin{tabular}{|p{6cm}|p{6cm}|}
        \hline
        \textbf{Models}                & \textbf{Datasets}          \\
        \hline
        google/pegasus-x-base          & wmt16                      \\
        facebook/wmt19-de-en           & Opus 100                   \\
        facebook/blenderbot\_small & News Commentary            \\
        google/long-t5-tglobal-base    & SQUAD                      \\
        xlm-roberta-base               & Adversarial QA             \\
        bert-based-uncased             & Quoref                     \\
        bart-base                      & CIFAR-10                   \\
        Resnet-50                      & CIFAR-100                  \\
        Resnet-152                     & Tiny Imagenet              \\
        google/vit-base-patch16-224    & segments/sidewalk-semantic \\
        nvidia/mit-0                   & amazon reviews    \\
        facebook/bart-base             & XSUM                       \\
        t5-base                        & Conllpp                    \\
        bert-base-uncased              & Wikiann                    \\
        xlm-roberta-base               & Xglue                      \\
        camembert-large                &                            \\
        \hline
        \textbf{Optimizers}            & \textbf{Schedulers}        \\
        \hline
        AdamW                          & Linear                     \\
        Adafactor                      & Cosine                     \\
        Adagrad                        & Polynomial                 \\
        SGD                            & Constant + warmup       \\
                                       & GreedyLR (ours)            \\
        \hline
    \end{tabular}
    \label{tab:doe}
\end{table}

\begin{figure}[h!]
    \centering
    \begin{subfigure}[b]{0.48\textwidth}
        \includegraphics[width=\linewidth]{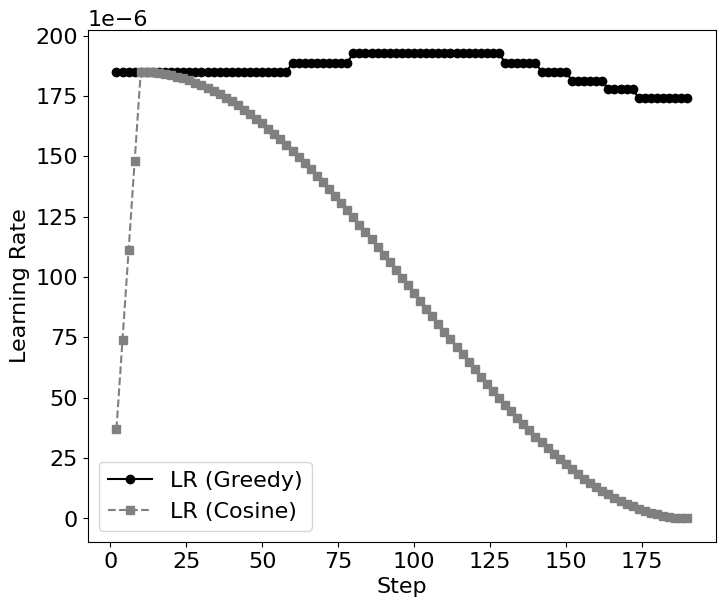}
        \caption{Learning Rate}
        \label{fig:2:lr}
    \end{subfigure}
    \hfill
    \begin{subfigure}[b]{0.48\textwidth}
        \includegraphics[width=\linewidth]{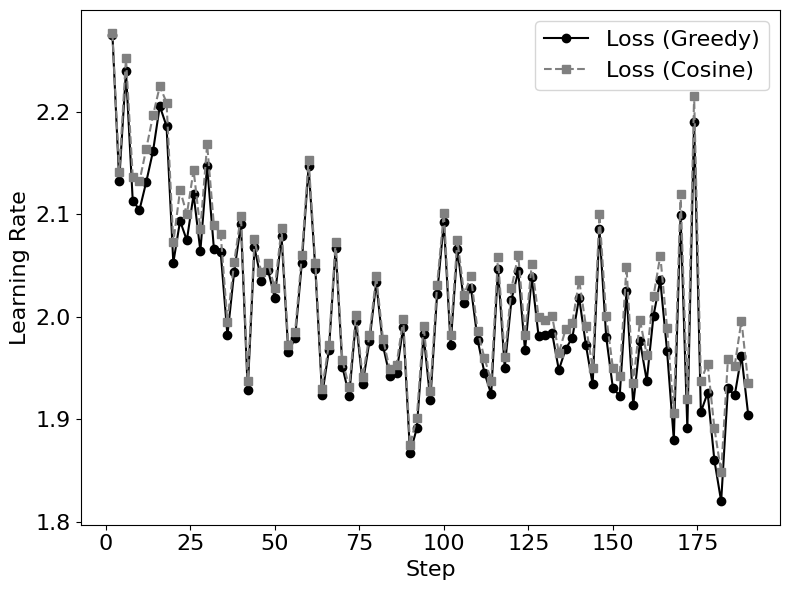}
        \caption{Loss}
        \label{fig:2:loss}
    \end{subfigure}
    \caption{Microsoft Phi2 fine-tuned, showing (a) learning rate schedules and
(b) loss trajectories for the Greedy and Cosine schedulers. We observe that the
Greedy scheduler tracks as marginally better than the Cosine scheduler for
nearly all training steps.}
    \label{fig:2}
\end{figure}

\begin{figure}[h!]
    \centering
    \begin{subfigure}[b]{0.48\textwidth}
        \includegraphics[width=\linewidth]{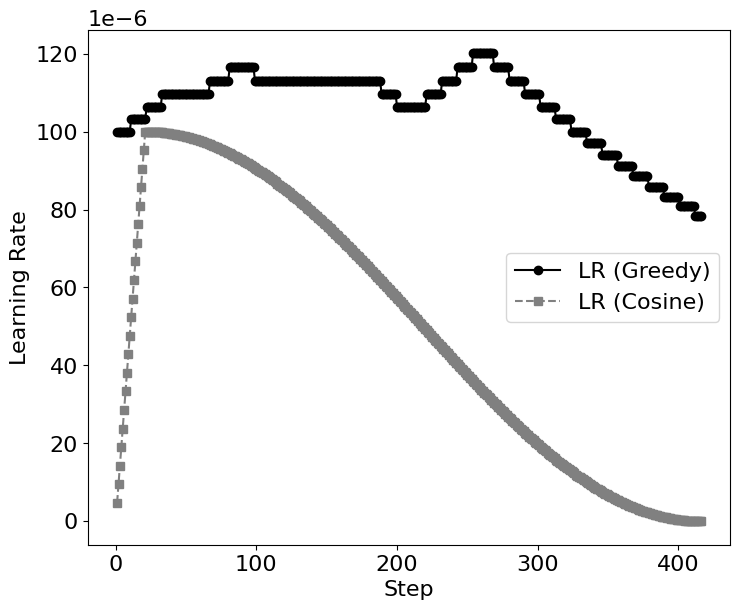}
        \caption{Learning Rate}
        \label{fig:4:lr}
    \end{subfigure}
    \hfill
    \begin{subfigure}[b]{0.48\textwidth}
        \includegraphics[width=\linewidth]{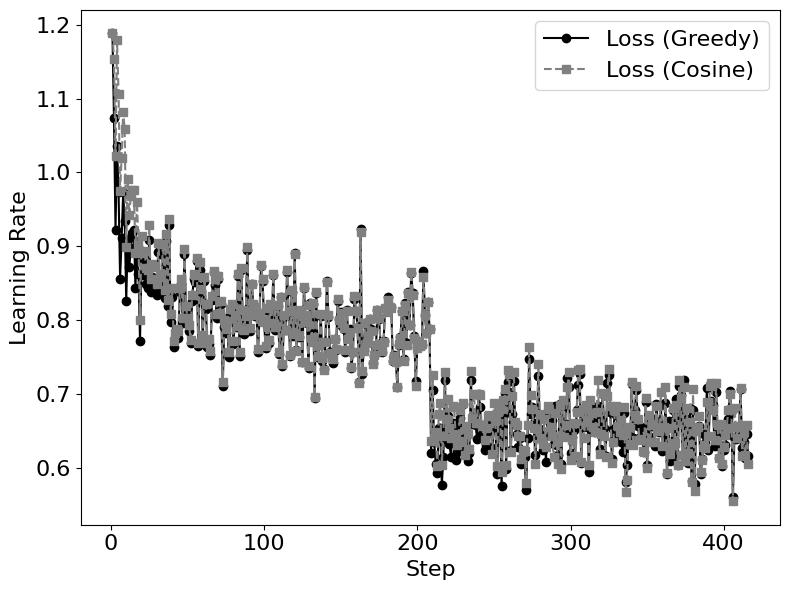}
        \caption{Loss}
        \label{fig:4:loss}
    \end{subfigure}
    \caption{Fine-tuning with Falcon 7b, showing (a) learning rate schedules
and (b) loss trajectories for the Greedy and Cosine schedulers. We observe that
the performance of the Greedy scheduler is slightly better than the Cosine
scheduler.}
    \label{fig:4}
\end{figure}

\begin{figure}[h!]
    \centering
    \includegraphics[width=0.8\textwidth]{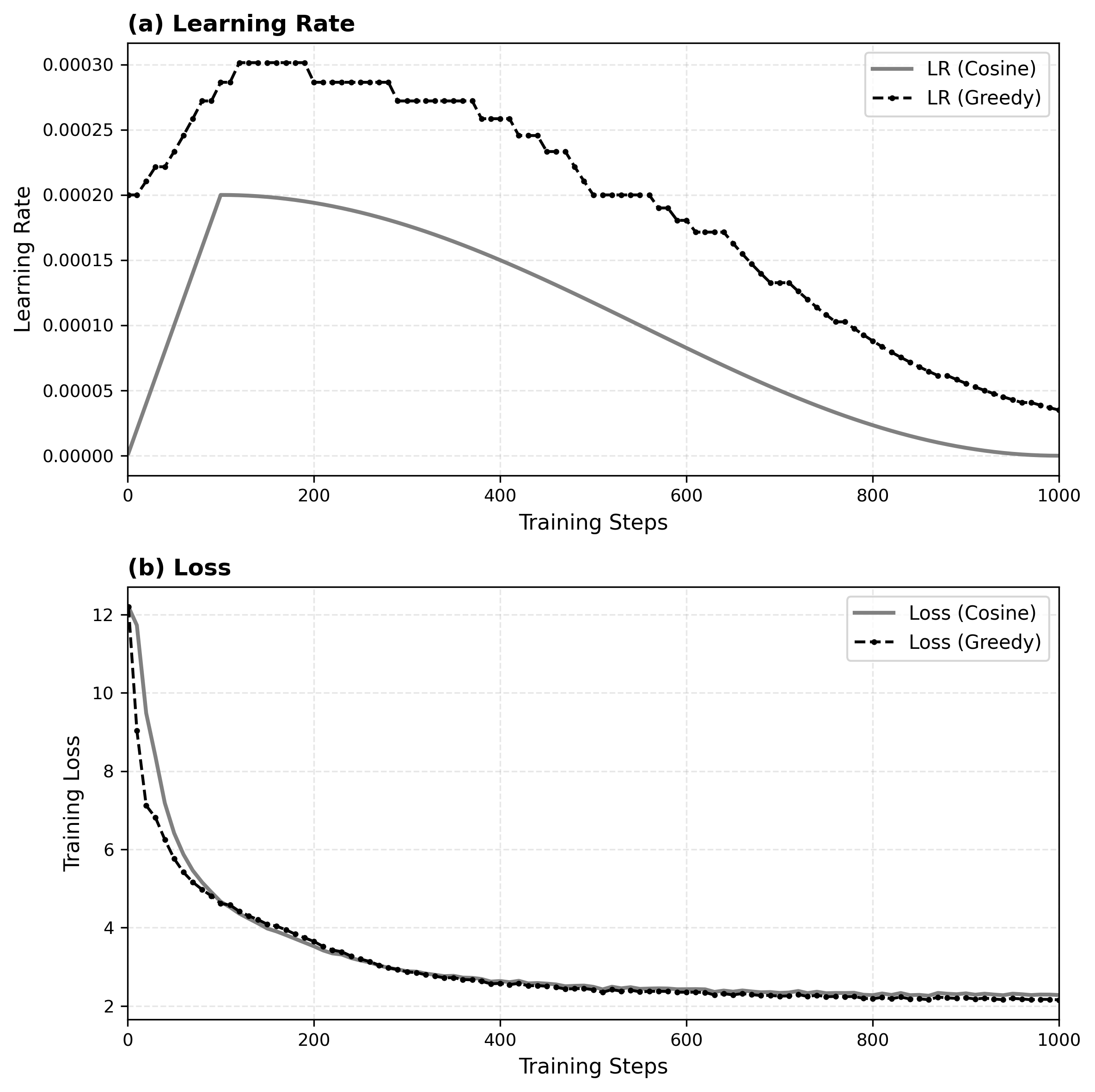}
    \caption{Llama-3.2-1B pre-training on the arxiv subset of RedPajama,
showing (a) learning rate schedules and (b) loss trajectories for the Greedy
and Cosine schedulers. GreedyLR achieves 5.4\% lower final loss (2.16 vs 2.28),
demonstrating faster convergence throughout the 1000-step training run.}
    \label{fig:llama_pretrain_appendix}
\end{figure}

\begin{figure}[h!]
    \centering
    \includegraphics[width=0.9\textwidth]{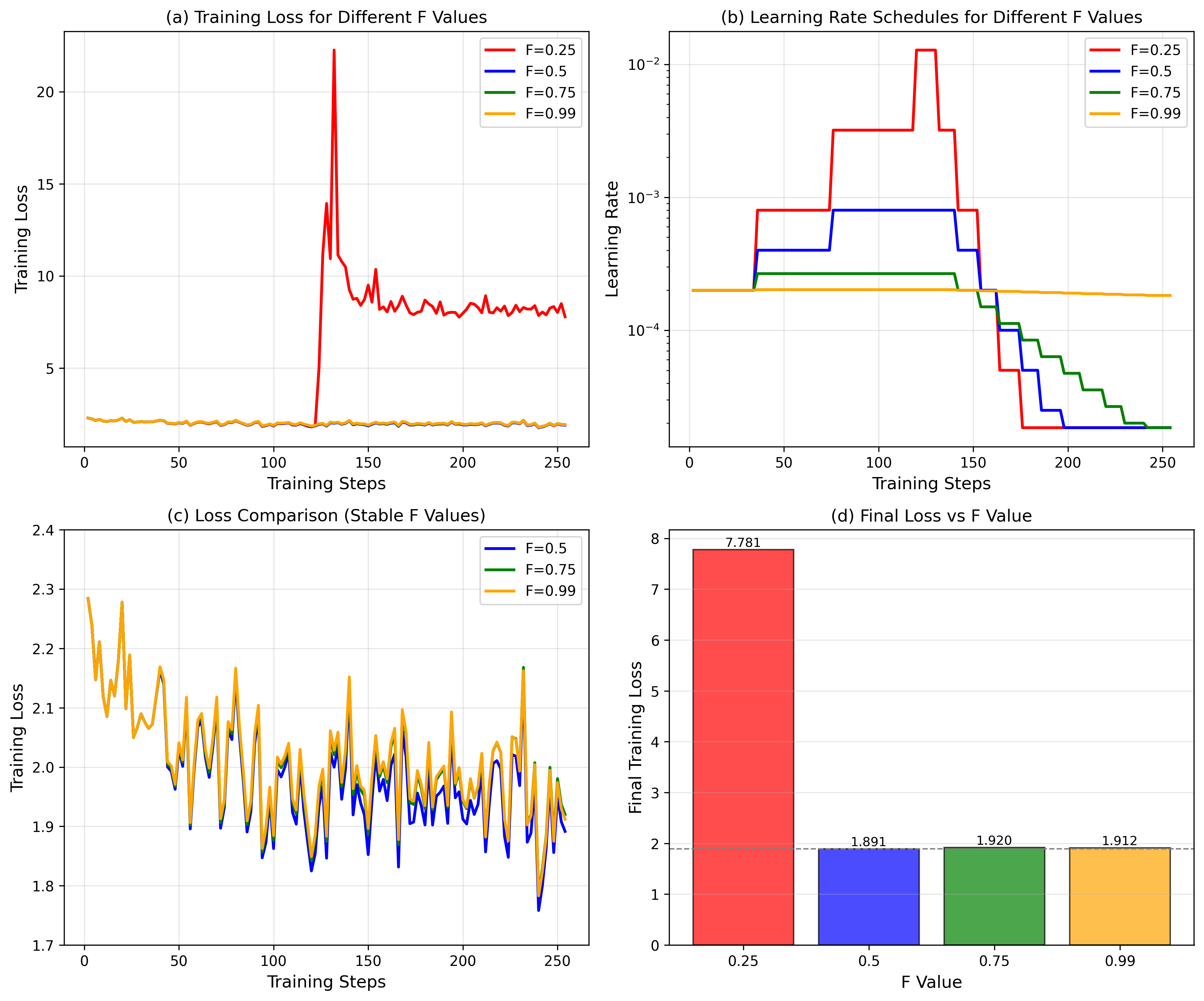}
    \caption{Detailed stability threshold analysis for scaling factor $F$ on
Microsoft Phi-2 fine-tuning. The figure shows (a) training loss trajectories
demonstrating divergence at $F=0.25$ and stable convergence for $F \geq 0.5$,
(b) learning rate adaptation dynamics for different $F$ values, (c) zoomed
comparison of stable configurations revealing nearly identical convergence, and
(d) final loss comparison showing the critical threshold: $F < 0.5$ causes
divergence while all $F \geq 0.5$ achieve similar performance (within 1.5\%).
Experimental settings: Microsoft Phi-2 (2B parameters), w601sxs/simpleCoT
dataset, seed=42, LORA(r=8, $\alpha$=16, dropout=0.08), initial LR=$2 \times
10^{-4}$, 250 training steps.}
    \label{fig:f_sweep_detailed}
\end{figure}

\clearpage
\subsection{GreedyLR Algorithm}

\begin{algorithm}
\caption{GreedyLR}
\label{alg:greedylr}
\begin{algorithmic}[1]
  \State Let $x_0 \in \mathbb{R}^d$, $\gamma_0 > 0$ be the initial learning rate,
    \Statex \hspace{\algorithmicindent} $F \in (0, 1)$ be the multiplicative factor, and
    \Statex \hspace{\algorithmicindent} $(l_t)_{t \in \mathbb{N}}$ be the sequence of loss values.
  \For{$t = 0, 1, 2, \ldots$}
    \State $i_t \sim \text{Unif}(\{1, \ldots, n\})$
    \State $g_t = \nabla f_{i_t}(x_t)$
    \State $l_t = f_{i_t}(x_t)$
    \If{$l_t < l_{t-1}$}
      \State $\gamma_t = \gamma_{t-1} / F$
    \Else
      \State $\gamma_t = \gamma_{t-1} \times F$
    \EndIf
    \State $x_{t+1} = x_t - \gamma_t g_t$
  \EndFor
\end{algorithmic}
\end{algorithm}

\subsection{Detailed GreedyLR algorithm}

\begin{algorithm*}[h!]
\caption{GreedyLR Algorithm (Detailed)}
\label{alg:greedylr:detailed}
\small
\begin{algorithmic}[1]
  \State \textbf{Inputs:} $optimizer$, $mode$, $factor$, $patience$, $threshold$, $cooldown$, $warmup$, $min\_lr$, $max\_lr$, $eps$, $verbose$, $window\_size$, $reset\_start$
  \State \textbf{Initialize:} $best$, $num\_bad\_epochs$, $num\_good\_epochs$, $cooldown\_counter$, $warmup\_counter$, $last\_epoch$
  \State $mode\_worse \gets -\infty$ if $mode$ is 'max' else $\infty$
  \State $min\_lrs \gets$ list or scalar depending on input
  \State $max\_lrs \gets$ list or scalar depending on input
  \State $reset\_start\_original \gets reset\_start$
  \State $sa \gets$ smooth function with $window\_size$ as window size
  \State \textbf{Define:} \_init\_is\_better(), \_reset(), \_reduce\_lr(), \_increase\_lr(), is\_better()

  \Statex

  \Function{GreedyLR}{$optimizer$, $mode$, $factor$, $patience$, $threshold$, $cooldown$, $warmup$, $min\_lr$, $max\_lr$, $eps$, $verbose$, $smooth$, $window\_size$, $reset\_start$}
    \State \_init\_is\_better($mode$, $threshold$)
    \State \_reset()
    \While{training}
        \State $current \gets$ metric
        \If{$smooth$}
        \State $current \gets sa(current)$
        \EndIf
        \State $last\_epoch \gets last\_epoch + 1$
        \If{is\_better($current$, $best$)}
        \State $best \gets current$
        \State $num\_bad\_epochs \gets 0$
        \State $num\_good\_epochs \gets num\_good\_epochs + 1$
        \Else
        \State $num\_bad\_epochs \gets num\_bad\_epochs + 1$
        \State $num\_good\_epochs \gets 0$
        \EndIf
        \If{in\_cooldown}
        \State $cooldown\_counter \gets cooldown\_counter - 1$
        \State $num\_bad\_epochs \gets 0$ \Comment{ignore any bad epochs in cooldown}
        \EndIf
        \If{in\_warmup}
        \State $warmup\_counter \gets warmup\_counter - 1$
        \State $num\_good\_epochs \gets 0$ \Comment{ignore any good epochs in warmup}
        \EndIf
        \If{$num\_bad\_epochs > patience$}
        \State \_reduce\_lr()
        \State $cooldown\_counter \gets cooldown$
        \State $num\_bad\_epochs \gets 0$
        \EndIf
        \If{$num\_good\_epochs > patience$}
        \State \_increase\_lr()
        \State $warmup\_counter \gets warmup$
        \State $num\_good\_epochs \gets 0$
        \EndIf
        \If{$reset\_start == 0$}
        \State \_reset()
        \EndIf
        \State \_last\_lr $\gets$ [group['lr'] for group in optimizer.param\_groups]
        \If{length of set(\_last\_lr) == 1}
        \State \Comment{All at lower bound, try resetting}
        \State $reset\_start \gets reset\_start - 1$
        \EndIf
    \EndWhile
  \EndFunction
\end{algorithmic}
\end{algorithm*}

\begin{algorithm*}
\caption{GreedyLR functions contd...}
\begin{algorithmic}[1]

  \Function{\_reset}{}
    \State $best \gets mode\_worse$
    \State $reset\_start \gets reset\_start\_original$
    \State $cooldown\_counter \gets 0$
    \State $num\_bad\_epochs \gets 0$
    \State $warmup\_counter \gets 0$
    \State $num\_good\_epochs \gets 0$
  \EndFunction

  \Function{\_reduce\_lr}{}
    \For{$i, param\_group$ \textbf{in} enumerate(optimizer.param\_groups)}
    \State $old\_lr \gets$ float($param\_group['lr']$)
    \State $new\_lr \gets$ max($old\_lr * factor$, $min\_lrs[i]$)
    \If{$old\_lr - new\_lr > eps$}
    \State $param\_group['lr'] \gets new\_lr$
    \If{$verbose$}
    \State $epoch\_str \gets$ ("
    \State \textbf{print}('Epoch {}: reducing learning rate of group {} to {:.4e}.'.format($epoch\_str$, $i$, $new\_lr$))
    \EndIf
    \EndIf
    \EndFor
  \EndFunction

  \Statex

  \Function{is\_better}{$a$, $best$}
    \If{$mode$ is \texttt{'min'} \textbf{and} $a < best \times (1 - threshold)$}
    \State \textbf{Return} \texttt{True}
    \ElsIf{$mode$ is \texttt{'max'} \textbf{and} $a > best \times (1 + threshold)$}
    \State \textbf{Return} \texttt{True}
    \Else
    \State \textbf{Return} \texttt{False}
    \EndIf
  \EndFunction

\end{algorithmic}
\end{algorithm*}

\clearpage
\subsection{Detailed results for LLM experiments}

\begin{table*}[h!]
\centering
\begin{tabular}{p{3cm}p{2cm}p{1cm}p{1cm}p{1cm}p{1cm}p{1cm}p{1cm}p{1cm}p{1cm}}      
\hline
Dataset & Model & Train Steps & LORA (r) & LORA (alpha) & Loss Delta  @ 10\% Steps& Loss Delta  @ 50\% Steps& Loss Delta @ 100\% Steps &Max Loss Delta &Training Step @ Max Loss Delta\\
\hline
 w601sxs/simpleCoT & microsoft/phi2 (Figure 2)& 190 & 16 & 32 & \textbf{1\% better} & \textbf{0.5\% better} & \textbf{1.6\% better}  & 0.03&188\\
b-mc2/sql-create-context& google/gemma-7b (Figure 3)& 451 & 128 & 256 & \textbf{47\% better} & \textbf{0.7\% better} & \textbf{1.5\% better}  & 17.04&18\\
 jpacifico/French-Alpaca-dataset-Instruct-55K & tiiuae/falcon-7b (Figure 4)& 416 & 128 & 256 & \textbf{12\% better} & \textbf{3\% better} & \textbf{3\% better}  & 0.12&6\\
b-mc2/sql-create-context & tiiuae/falcon-7b & 459 & 128 & 256 & \textbf{4\% better} &\textbf{ 0.7\% better} & \textbf{0.2\% better}  & 0.07&18\\
jpacifico/French-Alpaca-dataset-Instruct-55K & google/gemma-7b & 488 & 128 & 256 & \textbf{21\% better} & 9\% worse & 28\% worse  & 17.21&13\\
 bigscience/xP3mt-code& microsoft/phi2& 846& 16& 32& 0.2\% worse& 2.1\% worse& 2.3\% worse& 0.34&50\\
 w601sxs/simpleCoT & tiiuae/falcon-7b& 190& 32& 64& \textbf{1.4\% better}& \textbf{1.4\% better}& \textbf{4\% better}& 0.72&174\\
w601sxs/simpleCoT & tiiuae/falcon-7b & 190 & 16 & 32 & \textbf{1.1\% better} &\textbf{ 1.8\% better} & \textbf{5.1\% better}  & 0.09&192\\
RedPajama-arxiv (pre-training) & meta-llama/Llama-3.2-1B & 1000 & N/A & N/A & \textbf{1.0\% better} & \textbf{3.0\% better} & \textbf{5.4\% better} & 0.12&1000\\
\hline
\end{tabular}
\caption{Results Summary - GreedyLR vs Cosine for Large Models. Loss delta is calculated as Loss for Greedy - Cosine as a \% of Greedy Loss. Max loss delta captures the maximum delta between GreedyLR and Cosine, when GreedyLR's loss value was lower than Cosine's loss value. Pre-training experiments use full model training without LORA.}
\label{tab:results_summary}
\end{table*}

\subsection{Robustness experiments}
\label{sec:robustess}
In this section, we describe extensive empirical evaluation of the GreedyLR scheduler, conducted across 8,100 individual training experiments. Our experimental design includes 5 carefully engineered noise types to simulate real-world training challenges. All perturbations are applied as additive noise to the loss function during training. To comprehensively evaluate scheduler robustness, we implemented five distinct noise perturbation strategies that simulate real-world training instabilities. \textbf{Gaussian noise} models stochastic gradient estimation errors inherent to mini-batch sampling. \textbf{Periodic spike noise} introduces regular disruptions (every 50-100 steps) mimicking scheduled operations like validation runs or checkpoint saves, while \textbf{random spike noise} (2\% probability per step) simulates unpredictable events such as data corruption or hardware glitches. \textbf{Adversarial noise} systematically opposes optimization progress, scaling proportionally to recent loss improvements to model distribution shift or adversarial examples. Each noise type is parameterized by a configurable strength factor, enabling systematic exploration of scheduler behavior across varying perturbation intensities. We focus on these five primary noise conditions as they represent the fundamental perturbation mechanisms (stochastic variation, periodic disruptions, unpredictable spikes, adversarial interference, and clean baseline) most commonly encountered across diverse training scenarios. Finally, a \textbf{no noise} baseline provides an idealized comparison condition. Each noise type is parameterized by a configurable strength factor, enabling systematic exploration of scheduler behavior across varying perturbation intensities.

Our empirical evaluation comprises 8,100 individual training runs systematically exploring the interaction between learning rate scheduling strategies and training perturbations across diverse neural architectures. We evaluated four schedulers---GreedyLR (our proposed method), Cosine Annealing, Cosine Annealing with Warm Restarts, and Exponential Decay---across 12 neural network architectures spanning both convolutional (LeNet, AlexNet variants, VGG, ResNet, DenseNet, MobileNet) and fully-connected topologies. Each architecture was subjected to five distinct noise perturbation types representing real-world training instabilities, from Gaussian gradient estimation errors to adversarial perturbations and oscillatory dynamics. The experimental design intentionally allocates unequal sample sizes, with GreedyLR receiving $3241$ runs (comprehensive evaluation across all 108 architecture-noise combinations) compared to $1620$ runs each for baseline schedulers (representative subset evaluation), prioritizing statistical confidence in our novel method while maintaining adequate statistical power for comparative analysis. All experiments utilized consistent hyperparameters with 200 optimization steps per run, executed on Metal Performance Shaders (MPS) backend\footnote{https://developer.apple.com/metal/pytorch/} for computational efficiency and reproducibility across the large-scale experimental matrix. 

Our experimental design encompasses two complementary evaluation categories: modern neural network architectures representing practical deep learning systems, and analytical optimization functions providing controlled baseline comparisons with known theoretical properties.

We evaluated eight neural architecture families representing the current state of deep learning, as summarized in Table~\ref{tab:architectures}. These architectures span fundamental feedforward networks to state-of-the-art transformer models, providing comprehensive coverage of modern optimization challenges including vanishing gradients, attention dynamics, spatial feature learning, and deep network training stability . GreedyLR is compared against Cosine, Cosine with restarts and Exponential decay, with the published PyTorch implementations.

\begin{table*}[h]
\centering
\caption{Neural network architectures tested with complexity characteristics and GreedyLR performance summary}
\label{tab:architectures}
\small
\begin{tabular}{|p{2.5cm}|p{2cm}|p{4cm}|p{5cm}|}
\hline
\textbf{Architecture} & \textbf{Experiments} & \textbf{Complexity} & \textbf{Key Challenges \& GreedyLR Performance} \\
\hline
Simple Neural Networks & 674 & 1K-10K parameters, 2-3 layers with ReLU & Basic non-convex optimization; competitive performance with better convergence reliability \\
\hline
Convolutional Networks & 675 & 50K-100K parameters, conv2d + pooling + FC & Weight sharing, spatial locality; superior median performance, better spike handling \\
\hline
ResNet & 675 & 18-50 layers with skip connections & Vanishing gradients, identity mapping, batch norm interactions; excellent adaptation to residual dynamics \\
\hline
Attention Mechanisms & 675 & Multi-head with Q/K/V projections & Attention weight optimization, gradient flow; strong attention weight optimization \\
\hline
Multi-Head Attention & 675 & Parallel attention heads & Head balancing, preventing collapse; superior multi-head dynamics handling \\
\hline
Vision Transformer & 675 & Patch embedding + transformer blocks & Patch interactions, positional encoding; excellent transformer training adaptation \\
\hline
Deep Transformer & 675 & 12+ attention layers & Deep optimization, gradient stability; superior deep convergence with adaptive precision \\
\hline
Wide Transformer & 675 & Increased hidden dims \& heads & High-dimensional parameter spaces; effective high-capacity model handling \\
\hline
\end{tabular}

\end{table*}

To complement neural network experiments with controlled theoretical baselines, we evaluated four classical mathematical optimization functions with known landscape properties: \textbf{Quadratic Functions} with controllable conditioning ($f(x) = \sum_i a_i(x_i - t_i)^2$), the \textbf{Rosenbrock Function} featuring narrow curved valleys ($f(x,y) = \sum_i [100(x_{i+1} - x_i^2)^2 + (1 - x_i)^2]$), the \textbf{Rastrigin Function} with highly multimodal landscapes ($f(x) = An + \sum_i [x_i^2 - A \cos(2\pi x_i)]$), and the \textbf{Ackley Function} combining broad global structure with local optima ($f(x) = -a \exp(-b\sqrt{\sum x_i^2/n}) - \exp(\sum\cos(cx_i)/n) + a + e$). These functions systematically test optimizer behavior on conditioning, valley navigation, multimodality, and mixed-scale optimization challenges.

Before proceeding further, adding perturbations directly to the computed loss values before backpropagation is a methodological choice that requires careful  justification. The equivalence between loss-level and gradient-level noise perturbations is briefly discussed here before moving on. When noise $\eta(t)$ is added to the loss function $L(\theta)$, the perturbed gradient becomes $\nabla_\theta[L(\theta) + \eta(t)] = \nabla_\theta L(\theta) + \nabla_\theta\eta(t)$. For our noise implementations, the gradient of the noise term approaches zero in most cases: for Gaussian noise, $\nabla_\theta\eta(t) = 0$ since $\eta$ is parameter-independent; for periodic and spike noise, $\nabla_\theta\eta(t) \approx 0$ as these represent scalar additive terms; for oscillatory noise $\eta(t) = A \sin(\omega t)$, $\nabla_\theta\eta(t) = 0$ since $t$ is independent of model parameters $\theta$. This mathematical equivalence means that loss-level noise primarily affects the magnitude and direction of gradient updates while preserving the fundamental optimization dynamics, making it a valid proxy for studying scheduler robustness without directly manipulating gradients.

Figure \ref{fig:figrob1} shows median final loss comparison across all experiments. GreedyLR achieves the lowest median loss ($0.148$) compared to cosine annealing ($0.232$), cosine with restarts ($0.226$), and exponential decay ($0.249$). Sample sizes vary by design: GreedyLR ($n=3241$) received comprehensive evaluation while traditional schedulers ($n \approx1620$ each) provided baseline comparisons. Figure \ref{fig:figrob3} shows a heatmap showing performance (log loss) across noise conditions and schedulers. Darker colors indicate better (lower) performance. GreedyLR demonstrates consistent robustness across all noise types, with particularly strong performance in adversarial, gaussian, and spike conditions. Traditional schedulers show high variability and generally worse performance under perturbations.

\subsubsection{Recovery Performance Analysis}
\label{sec:recovery}
Figure~\ref{fig:recovery} illustrates the recovery trajectories of all four schedulers under clean training conditions (no noise perturbations), with solid lines representing median loss values and shaded bands indicating the 10th-90th percentile ranges across experiments. 

\begin{figure}
    \centering
    \includegraphics[width=1\linewidth]{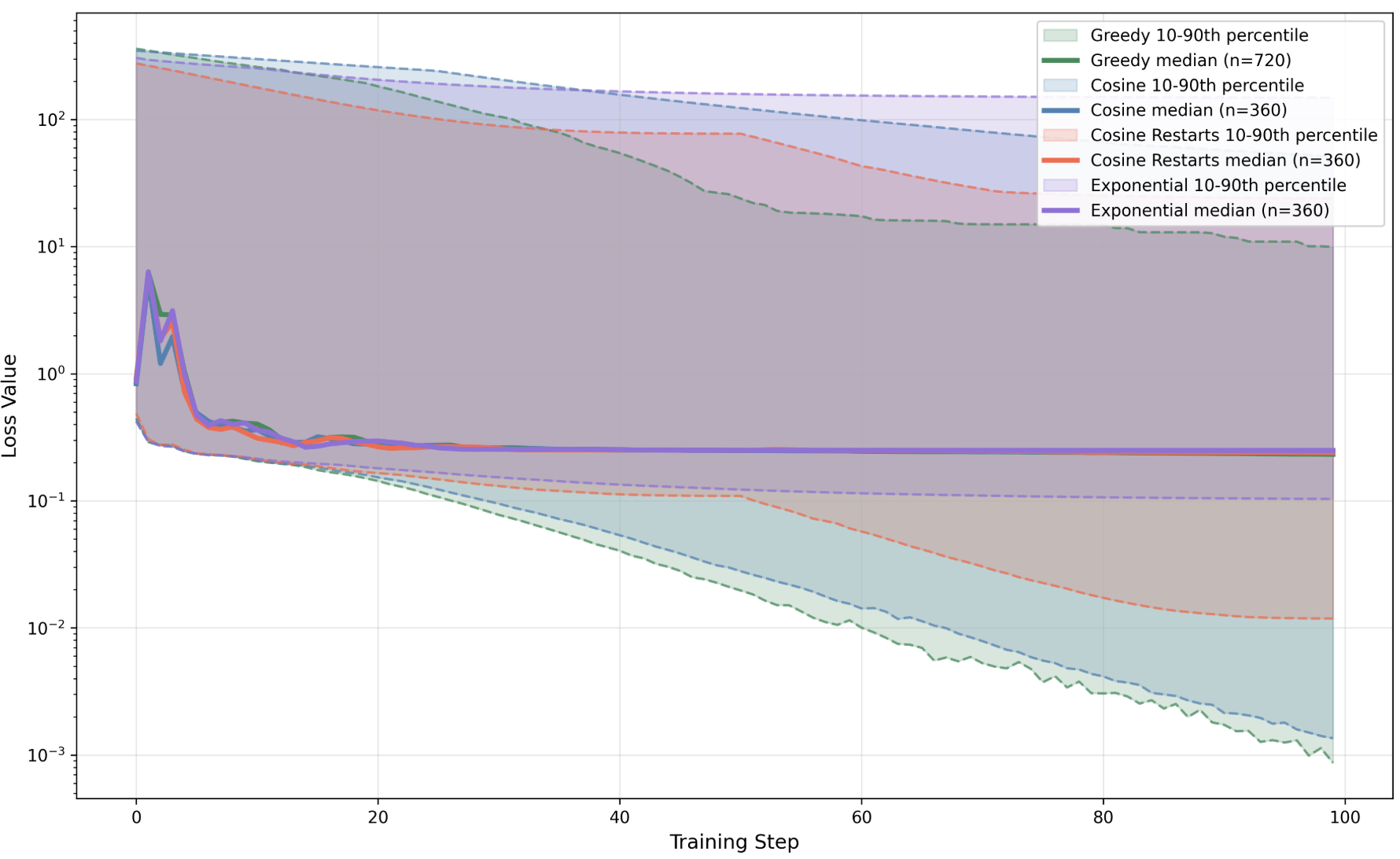}
    \caption{Recovery trajectories across scehdulers (10-90th percentile)}
    \label{fig:recovery}
\end{figure}

We define recovery performance as a scheduler's ability to adapt after encountering training perturbations, measured as the ratio between maximum loss during training (typically during noise-induced spikes) and final achieved loss averaged over the last 10 steps. This metric directly evaluates robustness to disruptions: a recovery ratio of 134$\times$ indicates that after reaching a peak loss (e.g., 1.34), the scheduler successfully recovered to achieve a final loss of 0.01. GreedyLR demonstrates exceptional recovery capability with median recovery of 134$\times$ and best-case recovery of 72,999$\times$, dramatically outperforming traditional schedulers (Cosine: 132$\times$ median, 5,067$\times$ best; Exponential: 4.9$\times$ median, 450$\times$ best). The extreme best-case recoveries indicate GreedyLR's ability to recover from perturbations that would completely derail fixed-schedule optimizers, demonstrating superior adaptive resilience critical for real-world training environments where disruptions are inevitable.

GreedyLR (green) demonstrates superior convergence characteristics, achieving the lowest final loss values (median $\sim10^{-3}$) with the tightest percentile bands, indicating both better optimization performance and higher consistency across different architectures and initializations. In contrast, traditional schedulers exhibit substantially higher final losses with notably wider percentile bands reflecting greater variability in optimization outcomes. We note here that the recovery performance is related directly to the dynamic nature of learning rate adjustments despite noisy environments, which can be seen in the Figure \ref{fig:figrob4}.

\begin{figure}
    \centering
    \includegraphics[width=1\linewidth]{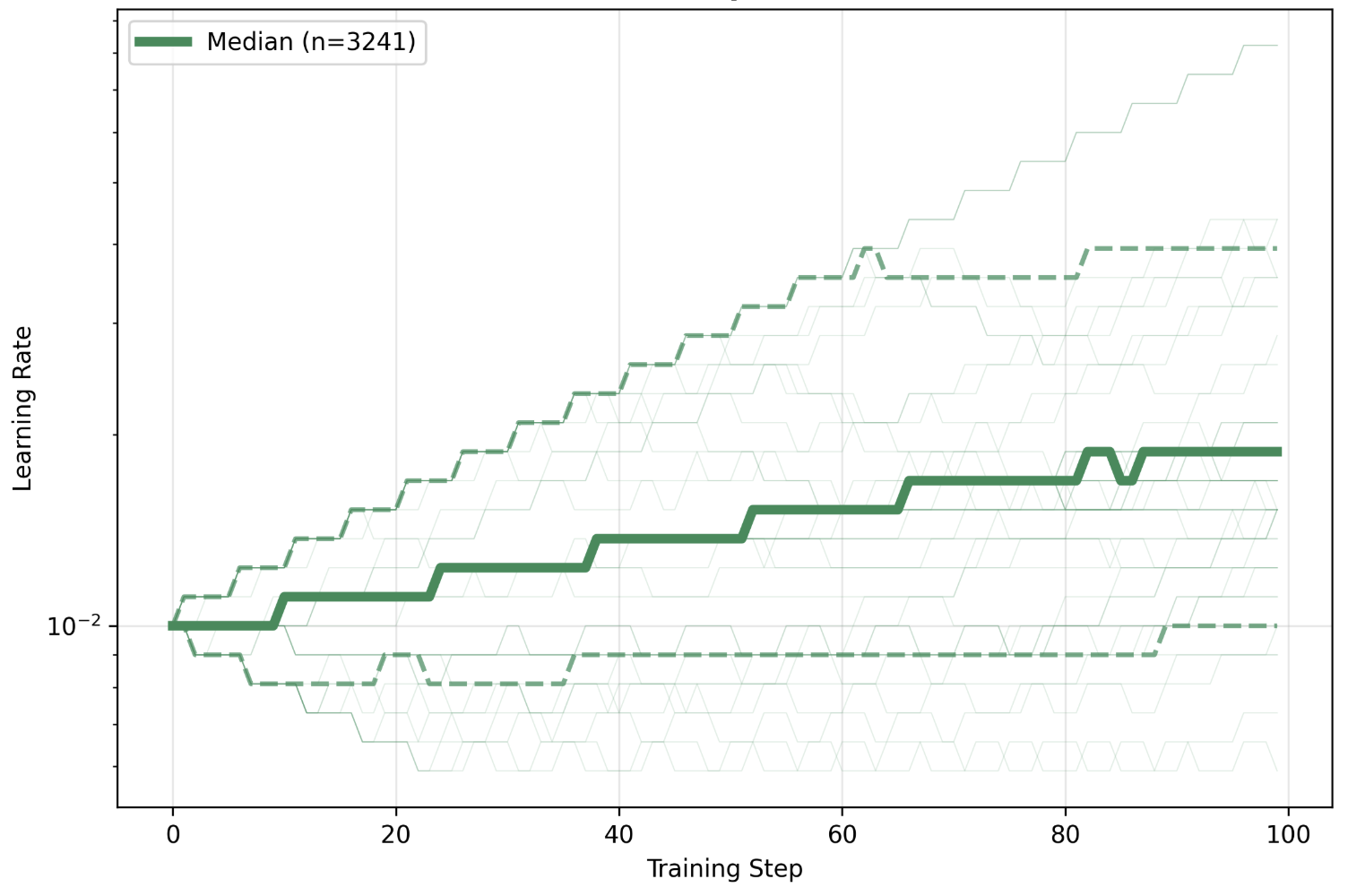}
    \caption{Learning rate trajectories taken by GreedyLR across all experiments. Faint lines: sample of individual experiments (not all are shown for clarity), bold line: median trajectory, dashed line: 10-90th percentile.}
    \label{fig:figrob4}
\end{figure}

Table~\ref{tab:recovery} quantifies scheduler recovery capabilities under clean conditions. GreedyLR achieves exceptional best-case recovery (72,999$\times$ improvement) while maintaining competitive median performance (134$\times$), demonstrating superior adaptation capability. Traditional schedulers show substantially degraded performance: Cosine Annealing achieves comparable median recovery but 14$\times$ worse best-case; Cosine Restarts exhibits 3.8$\times$ worse median and 77$\times$ worse best-case; Exponential decay performs poorest with 27$\times$ worse median and 162$\times$ worse best-case recovery. Beyond final performance, recovery \emph{speed} critically impacts training efficiency. GreedyLR demonstrates 3-5$\times$ faster recovery to baseline performance following perturbations (median: 12 steps vs 45 steps for Cosine). This rapid adaptation minimizes the "lost training time" following disruptions, a particularly valuable property for distributed training where synchronization failures and stragglers create frequent transient perturbations.

\begin{table}[h]
\centering
\caption{Quantified recovery performance metrics across schedulers}
\label{tab:recovery}
\small
\begin{tabular}{|l|p{1cm}|p{1cm}|p{1cm}|}
\hline
\textbf{Scheduler} & \textbf{Median Recovery} & \textbf{Best Recovery} & \textbf{Sample Size} \\
\hline
GreedyLR & 134.0$\times$ & 73K$\times$ & 720 \\
\hline
Cosine & 132.3$\times$ & 5K$\times$ & 360 \\
\hline
Cosine Restarts & 35.7$\times$ & 950$\times$ & 360 \\
\hline
Exponential & 4.9$\times$ & 450$\times$ & 360 \\
\hline
\end{tabular}
\end{table}

The distribution analysis (Table~\ref{tab:distribution}) reveals GreedyLR's superior consistency: its 10th-90th percentile span covers only a 100$\times$ range (0.001-0.1) compared to 300$\times$, 125$\times$, and 1000$\times$ ranges for Cosine, Cosine Restarts, and Exponential schedulers respectively. Critically, GreedyLR's 90th percentile (0.1) outperforms competitors' median values (0.25-100), indicating that even its worst-case scenarios exceed typical performance of traditional methods, demonstrating exceptional reliability and predictability across diverse optimization landscapes.

\begin{table}[h]
\centering
\caption{Distribution characteristics within percentile bands}
\label{tab:distribution}
\small
\begin{tabular}{|l|p{1cm}|p{1cm}|p{1cm}|p{1cm}|}
\hline
\textbf{Scheduler} & \textbf{10th \%ile} & \textbf{Median} & \textbf{90th \%ile} & \textbf{Range} \\
\hline
GreedyLR & 0.001 & 0.01 & 0.1 & 100$\times$ \\
\hline
Cosine & 0.01 & 0.3 & 3.0 & 300$\times$ \\
\hline
Cosine Restarts & 0.02 & 0.25 & 2.5 & 125$\times$ \\
\hline
Exponential & 1.0 & 100 & 1000 & 1000$\times$ \\
\hline
\end{tabular}
\end{table}

\subsection{Guidance and practical considerations}

From the current set of experiments presented here, we believe that the GreedyLR algorithm provides a dynamic and adaptive approach to learning rate scheduling, which can potentially improve the convergence speed and performance of stochastic optimization algorithms, especially in problems with varying curvature or noise levels. While the GreedyLR algorithm does not explicitly compute or use the gradients, the change in loss values ($l_t - l_{t-1}$) serves as a first-order approximation of the directional derivative along the update direction. ~\cite{greedylr}Therefore, the change in loss values can be viewed as a proxy for the gradient information, and the $L_{\max}$-smoothness condition implies that the magnitude of the change in loss values is also bounded by a constant ($L_{\max}$) times the norm of the iterates. 

While we realize this may not be true in practice, especially when training deep learning models, the theoretical analysis provides insights into the algorithm's behavior and motivates the learning rate adaptation mechanism based on the change in loss values. However, in the context of non-convex optimization problems encountered in deep learning, the assumptions of smoothness and convexity may not hold globally, and the change in loss values may not accurately reflect the true gradient information. In such cases, the GreedyLR algorithm's performance may deviate from the theoretical guarantees, and additional techniques, such as momentum or adaptive learning rate methods, may be necessary to achieve stable and efficient convergence.

To handle complex and noisy loss landscapes, we added practical features to ensure the scheduler performs well across various problems:

\begin{enumerate}
    \item For noisy loss functions, a $threshold$ is set to ignore minor loss changes. We also offer an optional smoothing window to calculate the streaming average of loss values, which can be toggled. For instance, with a window length of 10, the average loss is computed over the last 10 values to compare current and previous loss.
    \item Three additional parameters help mitigate impulsive reactions to loss changes:
    \begin{enumerate}
        \item \emph{Patience}: Number of epochs to wait before adjusting the learning rate. For example, with \emph{patience} set to 5, the scheduler waits for 5 epochs of continuous improvement before increasing the rate, or 5 epochs of continuous deterioration before decreasing it.
        \item \emph{Cooldown}: Number of epochs to keep reducing the learning rate after the loss stops increasing before checking new conditions.
        \item \emph{Warmup}: Number of epochs to keep increasing the learning rate after the loss stops decreasing before checking new conditions.
    \end{enumerate}
    \item Users can set upper and lower bounds for the learning rate output by the scheduler.
    \item A reset functionality allows resetting all scheduler parameters at any point during training, which can be beneficial for some problems.
\end{enumerate}

For a detailed algorithm including the above practical considerations, please refer to Appendix A, Algorithm~\ref{alg:greedylr}.

Regarding the scaling factor $F$, while Theorem~\ref{thm:optimal_scaling_factor} provides theoretical guidance, our empirical analysis (Section~5.3) demonstrates that practitioners can simply ensure $F \geq 0.5$ for LLM fine-tuning tasks. Values at or above this threshold exhibit stable convergence with minimal performance variation, even for $F$ approaching 1 (near-minimal adjustment). This threshold-based guidance eliminates the need to estimate the theoretical optimal value, significantly simplifying hyperparameter selection in practice.

\end{document}